\newcommand*{\Resize}[2]{\resizebox{0.935\linewidth}{!}{$#1$}}
\newcommand\footnoteref[1]{\protected@xdef\@thefnmark{\ref{#1}}\@footnotemark}
    \crefname{figure}{Figure}{Figures}%
\newcommand{\eqlabelleft}{(}
\newcommand{\eqlabelright}{)}
\newcommand{\pcref}[1]{%
\begingroup%
\renewcommand{\eqlabelleft}{}%
\renewcommand{\eqlabelright}{}%
\cref{#1}%
\endgroup%
}
\theoremstyle{plain}
\newtheorem{theorem}{Theorem}
\newtheorem{lemma}[theorem]{Lemma}
\theoremstyle{definition}
\theoremstyle{remark}
\newcommand{\alignwithmakebox}[2]{%
  \makebox[4.8em][l]{#1} #2%
}
\newcolumntype{C}{>{\centering\arraybackslash}p{2.5em}}
\def\eqref#1{equation~\ref{#1}}
\def\1{\bm{1}}
\def\rvc{{\mathbf{c}}}
\def\rvf{{\mathbf{f}}}
\def\rvs{{\mathbf{s}}}
\def\rvw{{\mathbf{w}}}
\def\rvx{{\mathbf{x}}}
\def\rvy{{\mathbf{y}}}
\def\rmI{{\mathbf{I}}}
\DeclareMathAlphabet{\mathsfit}{\encodingdefault}{\sfdefault}{m}{sl}
\SetMathAlphabet{\mathsfit}{bold}{\encodingdefault}{\sfdefault}{bx}{n}
\newcommand{\E}{\mathbb{E}}
\DeclareMathOperator*{\argmax}{arg\,max}
\title{Diffusion Adaptive Text Embedding for \\ Text-to-Image Diffusion Models}
\author{
Byeonghu Na\textsuperscript{\textmd{1}} \quad Minsang Park\textsuperscript{\textmd{1}} \quad Gyuwon Sim\textsuperscript{\textmd{1}} \quad Donghyeok Shin\textsuperscript{\textmd{1}} \\
\textbf{HeeSun Bae\textsuperscript{\textmd{1}} \quad Mina Kang\textsuperscript{\textmd{1}} \quad Se Jung Kwon\textsuperscript{\textmd{2}} \quad Wanmo Kang\textsuperscript{\textmd{1}} \quad Il-Chul Moon\textsuperscript{\textmd{1,3}}} \\
\textsuperscript{\textmd{1}}KAIST, \textsuperscript{\textmd{2}}NAVER Cloud, \textsuperscript{\textmd{3}}summary.ai\\
\texttt{\{byeonghu.na,pagemu,gkwlaks4886,tlsehdgur0,cat2507,kasong13\}@kaist.ac.kr},\\
\texttt{sejung.kwon@navercorp.com} , \texttt{\{wanmo.kang,icmoon\}@kaist.ac.kr} \\
}
\begin{document}

\maketitle

\begin{abstract}
Text-to-image diffusion models rely on text embeddings from a pre-trained text encoder, but these embeddings remain fixed across all diffusion timesteps, limiting their adaptability to the generative process. We propose Diffusion Adaptive Text Embedding (DATE), which dynamically updates text embeddings at each diffusion timestep based on intermediate perturbed data. We formulate an optimization problem and derive an update rule that refines the text embeddings at each sampling step to improve alignment and preference between the mean predicted image and the text. This allows DATE to dynamically adapts the text conditions to the reverse-diffused images throughout diffusion sampling without requiring additional model training. Through theoretical analysis and empirical results, we show that DATE maintains the generative capability of the model while providing superior text-image alignment over fixed text embeddings across various tasks, including multi-concept generation and text-guided image editing. Our code is available at \url{https://github.com/aailab-kaist/DATE}.
\end{abstract}

\section{Introduction}
\label{sec:3intro}

Text-to-image generation has recently received significant attention due to its capability to generate realistic and semantically accurate images from textual prompts. This progress has been largely driven by diffusion models~\cite{ho2020denoising,song2021scorebased}, particularly with large-scale models such as DALL·E~\cite{ramesh2021zero} and Stable Diffusion~\cite{rombach2022high}. These models use pre-trained text encoders like CLIP~\cite{radford2021learning} and T5~\cite{raffel2020exploring} to encode prompts into embeddings, providing crucial semantic information to diffusion models. Notably, the quality and semantic alignment of the generated images heavily depend on these embeddings~\cite{saharia2022photorealistic}.

Despite their success, pre-trained diffusion models often struggle with semantic alignment and human preferences. Recent studies have addressed this using external reward functions, either through preference fine-tuning~\cite{black2024training,liu2025improving,wu2024deep} or through applying guidance directly to denoised images during sampling~\cite{bansal2023universal}. However, these methods focus on model parameters or intermediate latent variables and overlook text embeddings. Most text-to-image diffusion models use fixed text embeddings throughout the sampling process (upper part of \cref{fig:overview}), limiting their adaptability to the evolving generation process. Since different diffusion timesteps influence generation in different ways~\cite{choi2022perception,yue2024exploring}, static embeddings can fail to capture evolving semantics, leading to suboptimal text-image alignment.

To address this limitation, we propose Diffusion Adaptive Text Embedding (DATE), which dynamically updates text embeddings at each diffusion sampling step based on the current denoised image (lower part of \cref{fig:overview}). By continuously tuning the embeddings to maximize alignment between the text prompt and the mean predicted image, DATE captures evolving semantics without extra model training or architectural changes. Notably, DATE operates entirely at test time by simply inserting embedding updates into existing sampling procedures. 
Our theoretical and empirical results demonstrate that DATE improves text-image alignment while preserving the model's original generative capabilities. When evaluated across various diffusion models and samplers, DATE outperforms fixed text embeddings consistently, indicating its agnostic characteristics to both models and samplers. Furthermore, DATE can be effectively integrated into various downstream tasks, such as multi-concept generation and text-guided image editing, highlighting its broad applicability.

\begin{figure*}[tp]
    \centering
    \begin{subfigure}[b]{0.7\linewidth}
        \centering
        \includegraphics[width=0.98\linewidth]{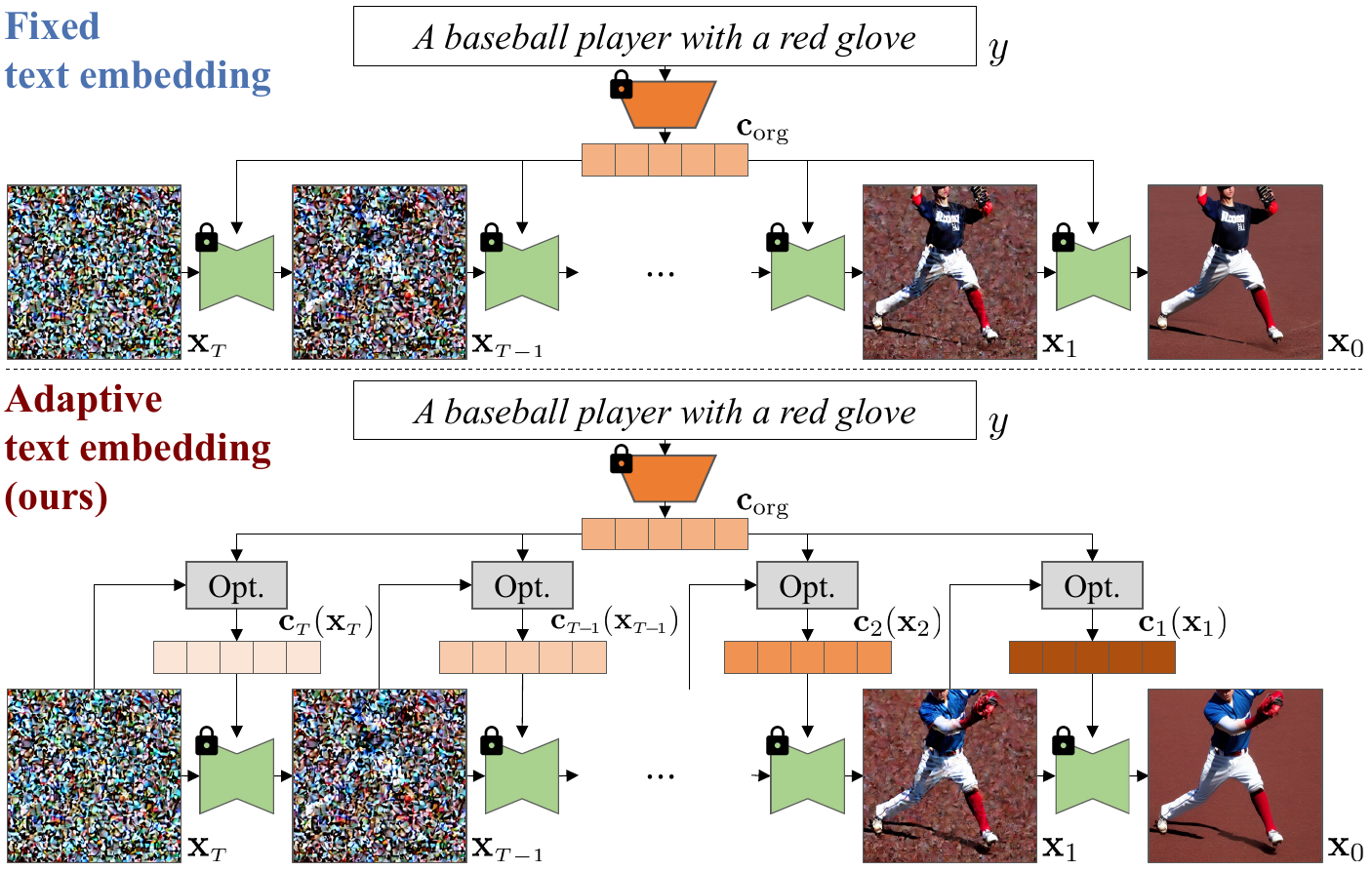}
        \caption{Comparison of fixed and adaptive text embeddings in diffusion sampling}
        \label{fig:overview}
    \end{subfigure}
    \hfill 
    \begin{subfigure}[b]{0.28\linewidth}
        \centering
        \includegraphics[width=0.98\linewidth]{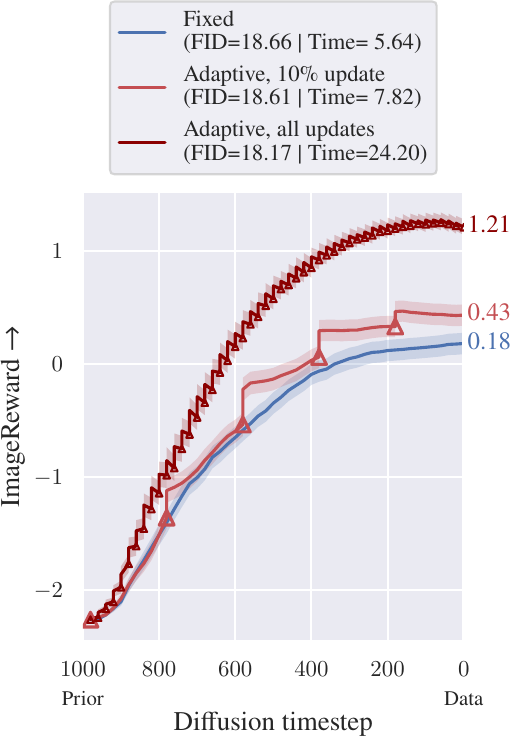}
        \caption{ImageReward for mean predicted images across timesteps}
        \label{fig:overview_ir}
    \end{subfigure}
    \caption{(a) Overview of the conventional fixed text embedding and the proposed adaptive text embedding during the text-to-image diffusion sampling process. Green shapes represent the diffusion model network, orange shapes represent the text encoder, and gray boxes labeled \textit{Opt.} indicate our text embedding optimization, detailed in \cref{fig:method}. (b) ImageReward~\cite{xu2023imagereward}, a text-to-image generation metric, for mean predicted images. Red triangles mark the timesteps where text embedding is updated.}
    \label{fig:overview_full}
    \vspace{-.7em}
\end{figure*}

\vspace{-.4em}
\section{Preliminaries}
\label{sec:3prelim}

\subsection{Diffusion models}

Diffusion models consist of two diffusion processes: a forward process and its corresponding reverse process~\citep{ho2020denoising}.
The forward process is typically defined as a fixed Markov noise process, which perturbs the data instance $\rvx_0 \sim q(\rvx_0)$ by adding Gaussian noise:
\begin{align}
    q(\rvx_{1:T}|\rvx_0) \coloneqq \textstyle \prod_{t=1}^{T} q(\rvx_{t} | \rvx_{t-1}), \text{ where } q(\rvx_{t}|\rvx_{t-1}) \coloneqq \mathcal{N}(\rvx_{t}; \sqrt{1-\beta_{t}} \rvx_{t-1}, \beta_{t} \mathbf{I}).
\end{align}
Here, $\rvx_{1:T}$ are latent variables for perturbed data, and $\beta_t$ is the variance schedule parameter. Diffusion models aim to approximate the reverse process via a trainable Markov chain with Gaussian transitions:
\begin{align}
    p_{\boldsymbol{\theta}}(\rvx_{0:T}) \coloneqq p_T(\rvx_{T}) \textstyle \prod_{t=1}^{T} p_{\boldsymbol{\theta}}(\rvx_{t-1} | \rvx_{t}), \text{ where } p_{\boldsymbol{\theta}}(\rvx_{t-1}|\rvx_{t}) \coloneqq \mathcal{N}(\rvx_{t-1}; \boldsymbol{\mu}_{\boldsymbol{\theta}}(\rvx_{t}, t), \sigma^2_{t}\rmI),
    \label{eq:backward_ddpm}
\end{align}
$\boldsymbol{\mu}_{\boldsymbol{\theta}}(\rvx_{t}, t)$ is the parametrized mean, $\sigma^2_{t}$ is the time-dependent variance, and $p_T$ is the prior distribution.

The mean function $\boldsymbol{\mu}_{\boldsymbol{\theta}}(\rvx_{t}, t)$ is trained by minimizing the upper bound of the negative log-likelihood~\cite{ho2020denoising}. Notably, this mean function can be equivalently expressed, up to a constant, as a score network $\rvs_{\boldsymbol{\theta}}(\rvx_t,t)$ that approximates the score function $\nabla_{\rvx_{t}} \log q_t(\rvx_t)$~\cite{song2021scorebased}:
\begin{align}
    \min_{\boldsymbol{\theta}} \mathbb{E}_t \mathbb{E}_{\rvx_t} \big [ || \rvs_{\boldsymbol{\theta}}(\rvx_t,t) - \nabla_{\rvx_{t}} \log q_t(\rvx_t) ||^2_2 \big ].
    \label{eq:score_matching}
\end{align}
Once the transition kernel $p_{\boldsymbol{\theta}}$ is trained, we sample iteratively from $T$ to near-$0$ using \cref{eq:backward_ddpm}.

\subsection{Text-to-image diffusion models}

Text-to-image generation aims to produce high-quality images that are semantically aligned with a given textual description. Recent advances in diffusion models have greatly improved this task~\cite{nichol2022glide,ramesh2022hierarchical,rombach2022high}. Text-to-image diffusion models can be formulated as learning a score network with an additional text-conditional input $\rvc$ to approximate a conditional score function:
\begin{align}
    \min_{\boldsymbol{\theta}} \mathbb{E}_t \mathbb{E}_{\rvx_t} \big [ || \rvs_{\boldsymbol{\theta}}(\rvx_t,\rvc,t) - \nabla_{\rvx_{t}} \log q_t(\rvx_t|y) ||^2_2 \big ].
    \label{eq:text_conditional_score_matching}
\end{align}
Here, $\rvc$ is the text embedding of the text prompt $y$, typically obtained from a pre-trained text encoder~\cite{esser2024scaling,ma2024exploring,saharia2022photorealistic,zhao2024bridging,zhuo2024luminanext}.

Despite their impressive realism, pre-trained diffusion models often struggle to maintain precise semantic alignment or satisfy human preferences. As the formulation of conditional score network suggests, text-conditional diffusion models can be improved by targeting three components: model parameters $\bm{\theta}$, perturbed data $\rvx_t$, and text embedding $\rvc$. Improvements in each component offer complementary benefits for overall quality and text-image consistency.

\textbf{Fine-tuning and data-space guidance}~~
Most prior works focus on optimizing the model parameters $\bm{\theta}$ through fine-tuning~\cite{black2024training,fan2023dpok,kim2022text,liu2025improving,wu2024deep}. These approaches adjust diffusion models using additional curated datasets or reward signals to improve alignment or human preference satisfaction. However, they require extensive retraining and substantial computational cost.
Another direction modifies the perturbed data $\rvx_t$ via external guidance functions. Classifier Guidance~\cite{dhariwal2021diffusion} steers samples using gradients of time-dependent classifier, while Universal Guidance~\cite{bansal2023universal} approximates this approach with a time-independent classifier. While effective, such methods require careful guidance scaling across timesteps, and the guidance component needs to be expressed as a classification probability. 

\textbf{Prompt optimization}~~
Another line of work focuses on the text-conditioning component of diffusion models. Prompt-level optimization targets the input text $y$ to produce better conditioning signals~\cite{hao2023optimizing,mo2024dynamic}. These methods use reinforcement learning with external reward models to train language models that generate refined prompts for diffusion models. However, they are costly to train and lack adaptability when the backbone or reward function changes.

\textbf{Text embedding update}~~
Beyond prompt tuning, the text embedding $\rvc$ itself plays a central role in aligning images with textual intent. Most diffusion models use frozen text encoders and apply the same fixed embeddings across all timesteps, limiting their ability to capture the evolving semantics~\cite{choi2022perception,kwon2023diffusion,yue2024exploring}. We hypothesize that dynamically adapting text embeddings, by transforming static $\rvc$ into time-dependent $\rvc_t$, can improve semantic alignment.

Recent works have begun to investigate this direction. EBCA~\cite{park2023energybased} updates text embeddings at each cross-attention layer via an energy-based objective but lacks global semantic control. P2L~\cite{chung2024prompt} directly optimizes text embeddings for an inverse problem objective.
Other works focus on special token tuning, such as Textual Inversion~\cite{gal2023an} and MinorityPrompt~\cite{um2025minority}, which learn special token embeddings for personalized or minority instance generation, sometimes extended to time-dependent variants~\cite{um2025minority}. However, these methods are task-specific and limited to special tokens. In contrast, DATE provides a general, training-free framework that dynamically updates the entire text embedding throughout the diffusion sampling process, enabling fine-grained semantic adaptation without retraining.

\subsection{Evaluation on text-to-image generation}

\begin{wrapfigure}[12]{R}{0.4\textwidth}
    \vspace{-1.2em}
     \centering
     \includegraphics[width=\linewidth]{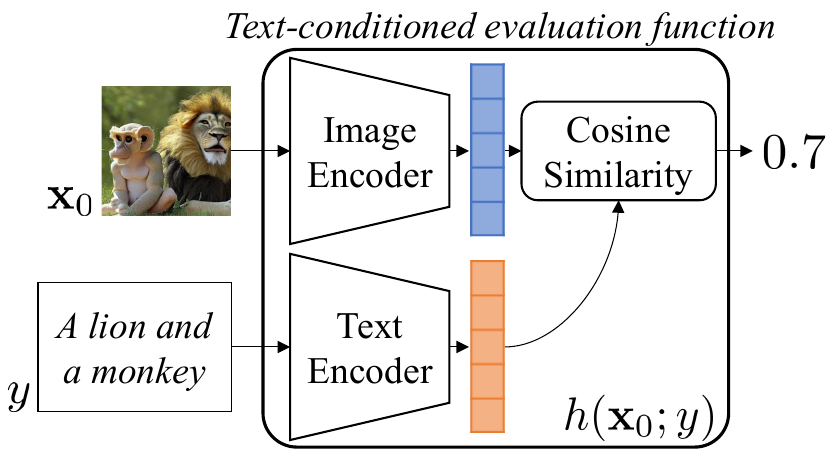}
     \caption{Examples of text-conditioned evaluation function.}
     \label{fig:evalution_function}
\end{wrapfigure}
Given the trained model for text-to-image generation, it is necessary to define an evaluation function that measures how well the generated outputs match the text-conditional inputs. We refer to this as the \textit{text-conditioned evaluation function}, which can also serve as an objective function to guide generation. Common metrics like CLIP score~\cite{hessel2021clipscore} and ImageReward~\cite{xu2023imagereward} are well-suited for this purpose and are easily applicable in the data space, as illustrated in \cref{fig:evalution_function}. Formally, this function can be expressed as $h(\rvx_0;y) \in \mathbb{R}$, and $h$ measures the alignment between the image $\rvx_0$ and the corresponding text input $y$. For simplicity, we assume that higher $h$ values indicate better alignment, implying stronger semantic consistency between images and texts.

\cref{fig:overview_ir} plots the evaluation score changes over time for the mean predicted image, comparing 100 samples generated with fixed text embeddings and proposed dynamic text embeddings using Stable Diffusion. We observe two limitations of the previous approach: 1) the evaluation function $h$ is used only for evaluation without being incorporated into the sampling process, and 2) $h$ is only assessed at the final sample. To address these, we directly leverage $h$ as the learning objective during the intermediate periods of the sampling process. This update to the text embedding improves the evaluation function value, ultimately enhancing the quality of text-to-image generation.

\section{Methods}
\label{sec:3method}

\subsection{Diffusion Adaptive Text Embedding (DATE)}
\label{subsec:3objective}

In this section, we propose an objective function to optimize text embeddings during the sampling process.
The diffusion sampling process can be expressed as follows:
\begin{align}
\label{eq:diffusion_sampling}
    \rvx_T \sim p_T(\rvx_T), \rvx_{t-1} \sim p_{\bm{\theta}}(\rvx_{t-1}|\rvx_{t},\rvc_t) \text{ for } t=T,\cdots,1 , 
\end{align}
where $p_{\bm{\theta}}(\rvx_{t-1}|\rvx_t,\rvc_t)$ is the distribution of the intermediate sample $\rvx_{t-1}$ at timestep $t-1$, generated from the sample $\rvx_t$ at timestep $t$ and the text embedding $\rvc_t$.
Typically, the text embedding $\rvc_t$ is obtained from a pre-trained text encoder $\mathbf{I}_{\bm{\phi}}$, i.e., $\rvc_{\mathrm{org}}\coloneqq\mathbf{I}_{\bm{\phi}}(y)$, and remains fixed across all diffusion timesteps, i.e., $\rvc_t = \rvc_{\mathrm{org}}$ for all $t$ (upper part of \cref{fig:overview}). However, we claim that the text embedding that is capable of producing effective generation output varies depending on the diffusion timestep $t$ and the current perturbed data $\rvx_t$ (lower part of \cref{fig:overview}). We refer to this dynamic text embedding as \textit{Diffusion Adaptive Text Embedding (DATE)}.

Our goal is to find the text embedding $\rvc_t(\rvx_t)$\footnote{For simplicity, we omit the dependency on $\rvx_t$ and write $\rvc_t(\rvx_t)$ as $\rvc_t$ whenever no ambiguity arises.} that maximizes the evaluation function $h$ for samples generated by the diffusion sampling process $p_{\bm{\theta}}$. The objective of DATE is expressed as follows:
\begin{align}
\label{eq:org_obj}
    \max_{\rvc_{1:T}} \mathbb{E}_{\rvx_T \sim p_T, \rvx_{0:T-1} \sim \prod_{\tau=1}^{T} p_{\bm{\theta}}(\rvx_{\tau-1}|\rvx_{\tau},\rvc_\tau)} [h(\rvx_0;y)].
\end{align}
It should be noted that \cref{eq:org_obj} optimizes the alignment $h$ only at the final reverse diffusion step, so the other latent variables $\rvx_{1:T}$ are not directly regularized in any specific direction. However, the below derivation from \cref{eq:seq_obj,eq:ult_obj,eq:obj_taylor2,eq:prop_obj} shows that the optimization direction for each $\rvx_t$ can be derived from \cref{eq:org_obj}, which optimizes only $\rvx_0$, considering that $\rvx_0$ is the sequential sampling result from $\rvx_{1:T}$.

Since the sampling process of diffusion models proceeds iteratively from timestep $T$ to $0$, we likewise aim to iteratively update the text embedding $\rvc_t$ according to this sequential order. Hence, this sequential sampling, which is required for practical implementation, renders \cref{eq:org_obj} into \cref{eq:seq_obj}. 
Specifically, the sequential nature of diffusion sampling requires two reformulations of \cref{eq:org_obj}.

\textbf{Motivation 1:} The adaptive text embedding $\rvc_t$ needs to be determined sequentially because the corresponding image is estimated at timestep $t$. Thus, the sequential decision on $\rvc_t$ decomposes the joint optimization into the step-wise optimization. Particularly, the diffusion sampling process requires a specific order in sequential decisions, i.e., from $T$ to $0$. This motivation of decomposed and ordered optimization is reflected in the separated \textit{max} operator in \cref{eq:seq_obj}.

\textbf{Motivation 2:} The evaluation of $h$ is performed at the final sampling step $0$, not at intermediate step $t$. For simplicity, we need to calculate the text-image alignment $h$ on the final generation result while maintaining $\rvc_t$. This motivates maintaining $\rvc_t$ until timestep $0$ of the data distribution, which is reflected in the equality constraints on the feasible set $\mathcal{C}_t$ of each \textit{max} operator in \cref{eq:seq_obj}.
\begin{align}
    \max_{ \rvc_{1} \in \mathcal{C}_1} \cdots &\max_{ \rvc_{t} \in \mathcal{C}_t} \cdots \max_{\rvc_{T} \in \mathcal{C}_T} \mathbb{E}_{\rvx_T \sim p_T, \rvx_{0:T-1} \sim \prod_{\tau=1}^{T} p_{\bm{\theta}}(\rvx_{\tau-1}|\rvx_{\tau},\rvc_\tau)} [h(\rvx_0;y)]  \label{eq:seq_obj}
\end{align}
Here, $\mathcal{C}_t\coloneqq\{\rvc_t:||\rvc_t-\rvc_{\mathrm{org}}||_2\leq\rho,\rvc_{\tau}=\rvc_{t} ~\forall \tau<t\}$, $\rvc_{\mathrm{org}}$ is the original text embedding, and $\rho$ is a scale hyperparameter. The constraint $|| \rvc_t - \rvc_{\mathrm{org}}||_2 \leq \rho $ keeps the optimized text embedding $\rvc_t$ does not deviate significantly from the original embedding $\rvc_{\mathrm{org}}$, preserving its original semantic meaning. 

Specifically, the optimization problem for $\rvc_t$ in \cref{eq:seq_obj} can be derived as follows:
\begin{align}
\label{eq:ult_obj}
    \max_{ \rvc_t \in \mathcal{C}_t} \mathbb{E}_{\rvx_{0:t-1} \sim \prod_{\tau=1}^{t} p_{\bm{\theta}}(\rvx_{\tau-1}|\rvx_{\tau},\rvc_\tau)} [h(\rvx_0;y)] \Leftrightarrow  \max_{ \rvc_{t} : || \rvc_t -\rvc_{\mathrm{org}} ||_2 \leq \rho } \mathbb{E}_{\rvx_{0} \sim p_{\bm{\theta}}(\rvx_{0}|\rvx_{t},\rvc_t)} [h(\rvx_0;y)].
\end{align}
On the left-hand side of \cref{eq:ult_obj}, the terms from timesteps from \mbox{\(t+1\)} to $T$ are eliminated since $\rvc_{t+1:T}$ are set by the inner optimizations. On the right-hand side, the simplification arises from the constraint that all text embeddings from $t$ to $1$ are identical. Therefore, at timestep $t$, our objective can be expressed as the expectation of the text-conditioned evaluation function with respect to $p_{\bm{\theta}}(\rvx_0|\rvx_t,\rvc_t)$.

However, solving \cref{eq:ult_obj} is computationally challenging. Evaluating the objective requires Monte Carlo sampling of $\rvx_0$ from $\rvx_t$, and each sample involves iterative sampling with multiple network evaluations, resulting in high computational cost.
To address this, we apply a first-order Taylor approximation of the text-conditioned evaluation function $h$ around $\bar{\rvx}_0\coloneqq\bar{\rvx}_0(\rvx_t,\rvc_t;\bm{\theta})=\mathbb{E}_{\rvx_0 \sim p_{\bm{\theta}}(\rvx_0|\rvx_t,\rvc_t)}[\rvx_0]$, a technique commonly used in previous studies~\cite{bansal2023universal,chung2024prompt}:
\begin{align}
    \mathbb{E}_{\rvx_{0}} [h(\rvx_0;y)] & \approx h(\bar{\rvx}_0;y) + \mathbb{E}_{\rvx_{0}} \big [ (\rvx_0 - \bar{\rvx}_0)^T \nabla_{\rvx} h(\bar{\rvx}_0;y) \big ] = h(\bar{\rvx}_0;y). \label{eq:obj_taylor2}
\end{align}
The equality in \cref{eq:obj_taylor2} holds because $ \mathbb{E}_{\rvx_{0}} [\rvx_0 - \bar{\rvx}_0] = 0$.
Therefore, through the approximation in \cref{eq:obj_taylor2}, we propose the following alternative objective instead of \cref{eq:ult_obj}:
\begin{align}
\label{eq:prop_obj}
    \max_{\rvc_t: ||\rvc_t - \rvc_{\mathrm{org}}||_2 \leq \rho }  h(\bar{\rvx}_0(\rvx_t,\rvc_t;\bm{\theta});y) =: h_t(\rvx_t,\rvc_t;y,\bm{\theta}).
\end{align}
This objective in \cref{eq:prop_obj} optimizes the text-conditioned evaluation function on the mean predicted image $\bar{\rvx}_0$ given the current perturbed image $\rvx_t$ and the text embedding $\rvc_t$. Using the Tweedie's formula~\cite{efron2011tweedie}, the mean predicted image $\bar{\rvx}_0$ can be computed via a single score network evaluation:
\begin{align}
\label{eq:3tweedie}
    \bar{\rvx}_0(\rvx_t,\rvc_t;\bm{\theta})= (\rvx_t + (1-\bar{\alpha}_t) \rvs_{\bm{\theta}} (\rvx_t, \rvc_t, t)) \mathbin{\slash} {\sqrt{\bar{\alpha}_t}} ,
\end{align}
where $\rvs_{\bm{\theta}} (\rvx_t, \rvc_t, t)$ is a conditional score network, and $\bar{\alpha}_t \coloneqq \prod_{\tau=1}^t (1-\beta_\tau)$ is the constant related to the variance schedule of the diffusion process.

\begin{figure}[tp]
    \centering
    \begin{minipage}[t]{0.52\textwidth}
        \vspace*{0pt}
        \centering
        \includegraphics[width=\linewidth]{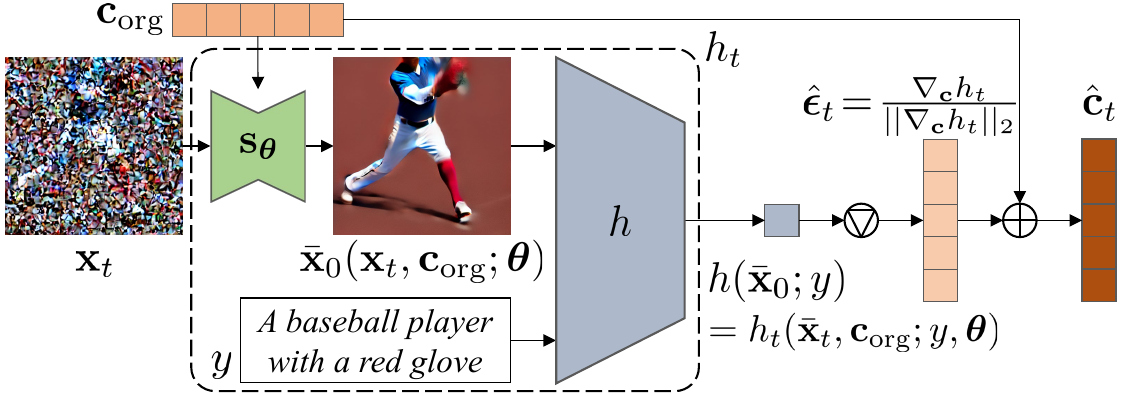}
        \caption{Update step of DATE at timestep $t$. The symbol with an inverted triangle inside a circle represents the normalized gradient with respect to $\rvc$, and $\oplus$ denotes summation.}
        \label{fig:method}
    \end{minipage}
    \hfill
    \begin{minipage}[t]{0.45\textwidth}
        \vspace*{-.5em}
        \centering
        \begin{algorithm}[H]
            \caption{Diffusion Sampling with DATE}
            \label{alg:text_update}
            \begin{algorithmic}[1]
                \STATE $\rvx_T \sim p_T(\cdot)$
                \STATE $\rvc_{\text{org}} \leftarrow \mathbf{I}_{\bm{\phi}}(y)$
                \FOR{$t=T$ {\bfseries to} $1$}
                    \IF{$t \in \{ \text{text update steps} \} $}
                        \STATE $\rvc \leftarrow \rvc_{\text{org}} + \rho \frac{\nabla_{\bm{c}} h_t(\rvx_t,\rvc_{\text{org}};y,\bm{\theta})}{|| \nabla_{\bm{c}} h_t(\rvx_t,\rvc_{\text{org}};y,\bm{\theta}) ||_2}  $
                    \ENDIF
                    \STATE $\rvx_{t-1} \leftarrow \rvx_t + \tfrac{1}{2} \beta_t (\rvx_t + \rvs_{\bm{\theta}}(\rvx_t, \rvc, t))$
                \ENDFOR
            \end{algorithmic}
        \end{algorithm}
    \end{minipage}
\end{figure}

\subsection{Update process of DATE}
\label{subsec:3updating}

We present an update process of text embeddings at each timestep to optimize the objective in \cref{eq:prop_obj} using the current perturbed data. Since evaluating this objective requires score network propagation (see \pcref{eq:3tweedie}), computational cost increases with each update. Therefore, inspired by sharpness-aware minimization~\cite{foret2021sharpnessaware}, we estimate the updated text embedding $\rvc_t$ in a single update.

We decompose the updated embedding $\rvc_t$ into its original embedding $\rvc_{\mathrm{org}}$ and an update direction $\bm{\epsilon}_t$:
\begin{align}
\label{eq:3update1}
    \rvc_t = \rvc_{\mathrm{org}} + \bm{\epsilon}_t.
\end{align}
By expressing $\rvc_t$ in terms of an update direction $\bm{\epsilon}_t$, we reformulate the optimization problem in \cref{eq:prop_obj} into an equivalent problem over $\bm{\epsilon}_t$, as shown in \cref{eq:3update2_2}.
Next, we approximate $h_t(\cdot, \rvc_{\mathrm{org}}+\bm{\epsilon}_t;\cdot,\cdot)$ using a first-order Taylor expansion around $\bm{\epsilon}_t=\bm{0}$ (see the appendix for derivation):
\begin{align}
     \bm{\epsilon}_t^* \coloneqq \argmax_{|| \bm{\epsilon}_t ||_2 \leq \rho} h_t(\rvx_t, \rvc_{\mathrm{org}} + \bm{\epsilon}_t;y,\bm{\theta}) \approx
     \argmax_{|| \bm{\epsilon}_t ||_2 \leq \rho} \bm{\epsilon}_t^{\text{T}} \nabla_{\bm{c}} h_t(\rvx_t,\rvc_{\mathrm{org}};y,\bm{\theta}) =: \bm{\hat{\epsilon}}_t. \label{eq:3update2_2}
\end{align}
The solution to this optimization problem can be derived using the Cauchy-Schwarz inequality. Consequently, the estimator for the optimal text embedding $\hat{\rvc}_t$ is given by
\begin{align}
     \bm{\hat{\epsilon}}_t =  \rho \frac{\nabla_{\bm{c}} h_t(\rvx_t,\rvc_{\mathrm{org}};y,\bm{\theta})}{|| \nabla_{\bm{c}} h_t(\rvx_t,\rvc_{\mathrm{org}};y,\bm{\theta}) ||_2}; ~~~~
     \hat{\rvc}_t  =  \rvc_{\mathrm{org}} +  \rho\frac{\nabla_{\bm{c}} h_t(\rvx_t,\rvc_{\mathrm{org}};y,\bm{\theta})}{|| \nabla_{\bm{c}} h_t(\rvx_t,\rvc_{\mathrm{org}};y,\bm{\theta}) ||_2}. 
\label{eq:3update3_2}
\end{align}
This update step refines $\rvc_t$ by adjusting it along the normalized gradient direction of $h_t$, maximizing semantic alignment at the diffusion timestep $t$ under the current perturbed data $\rvx_t$. As a result, the updated text embedding $\hat{\rvc}_t$ dynamically adapts to the specific diffusion timestep and the corresponding perturbed data. \cref{fig:method} visualizes this update step, and \cref{alg:text_update} presents the overall algorithm for the diffusion sampling process with DATE. We introduce text embedding update steps in lines 4-6 of \cref{alg:text_update}, and the denoising process (line 7) can be performed using various diffusion samplers.

\subsection{Theoretical analysis}
\label{subsec:theory}

We provide theoretical analyses of DATE, including performance guarantees and its influence on the data space. We provide proofs and additional approximation error analyses in \cref{app_sec:proof}.

First, we show that both unconstrained and constrained optimizations of the text embedding in \cref{eq:org_obj,eq:seq_obj} produce a better text embedding than the fixed text embedding $\rvc_{\mathrm{org}}$.

\begin{restatable}{proposition}{thmmain}
\label{thm:main}
    Let $h(\rvc_{1},\cdots,\rvc_{T}) \coloneqq  \mathbb{E}_{\rvx_{0:T}} [h(\rvx_0;y)]$ where $\rvx_{0:T-1} \sim \prod_{\tau=1}^{T} p_{\bm{\theta}}(\rvx_{\tau-1}|\rvx_{\tau},\rvc_\tau)$, $\rvx_T \sim p_T$, and $\mathcal{C}_t \coloneqq \{ \rvc_t : || \rvc_t -\rvc_{\mathrm{org}} ||_2 \leq \rho , \rvc_{\tau}=\rvc_{t} ~\forall \tau < t \}$.
    Then,
    \begin{align}
        \max_{\rvc_{1:T}}h(\rvc_{1},\cdots,\rvc_{T}) & = \max_{ \rvc_{1}} \cdots \max_{ \rvc_{t}} \cdots \max_{\rvc_{T}} h(\rvc_{1},\cdots,\rvc_{T}) \label{eq:thm_eq_2} \\
        & \geq \max_{ \rvc_{1} \in \mathcal{C}_1} \cdots \max_{ \rvc_{t} \in \mathcal{C}_t} \cdots \max_{\rvc_{T} \in \mathcal{C}_T} h(\rvc_{1},\cdots,\rvc_{T}) \geq h(\rvc_{\mathrm{org}},\cdots,\rvc_{\mathrm{org}}). \label{eq:thm_eq_4}
    \end{align}
\end{restatable}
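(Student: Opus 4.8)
The plan is to treat the three relations separately, since each reduces to an elementary property of maxima once the feasible sets are understood. First I would establish the equality in \cref{eq:thm_eq_2} as the standard fact that an unconstrained joint maximum decomposes into nested maxima taken in any order: for any function of several vector arguments, $\max_{\rvc_{1:T}} h = \max_{\rvc_1}\cdots\max_{\rvc_T} h$, independently of the nesting order. There are no constraints on either side, so this step is purely formal (assuming the relevant maxima are attained, e.g. by continuity of $h$ together with the compactness that the norm balls will enforce in the constrained problem; otherwise one replaces $\max$ by $\sup$ throughout).

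For the two inequalities in \cref{eq:thm_eq_4} I would first unravel the coupled feasible sets $\mathcal{C}_t$. The key observation is that imposing all of the equality constraints $\rvc_\tau = \rvc_t$ for $\tau<t$ across every level---in particular the level $t=T$, which demands $\rvc_1=\cdots=\rvc_{T-1}=\rvc_T$---forces the entire tuple to collapse onto a single shared vector, so the joint admissible region is exactly $S \coloneqq \{(\rvc,\dots,\rvc): \| \rvc-\rvc_{\mathrm{org}} \|_2\le\rho\}$. Consequently the constrained nested maximization is well-defined and equals $\max_{\| \rvc-\rvc_{\mathrm{org}} \|_2\le\rho} h(\rvc,\dots,\rvc)$, i.e. a single maximization of $h$ over the norm ball along the diagonal. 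This reduction is what turns the coupled nested statement into something transparent.

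With this in hand, the first inequality follows from monotonicity of the maximum under domain restriction: every tuple in $S$ is a legitimate unconstrained choice of $(\rvc_1,\dots,\rvc_T)$, so $S$ is contained in the full product space over which the unconstrained maximum ranges, and hence $\max_{S} h \le \max_{\rvc_{1:T}} h$, the right-hand side being the unconstrained nested maximum by \cref{eq:thm_eq_2}. The second inequality follows by exhibiting a single feasible point: the diagonal tuple $(\rvc_{\mathrm{org}},\dots,\rvc_{\mathrm{org}})$ lies in $S$ because $\| \rvc_{\mathrm{org}}-\rvc_{\mathrm{org}} \|_2=0\le\rho$ and the equality constraints hold trivially. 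Evaluating the constrained maximum at this point gives $\max_{S} h \ge h(\rvc_{\mathrm{org}},\dots,\rvc_{\mathrm{org}})$, which is the claimed lower bound.

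The main obstacle I anticipate is not any individual inequality but making the coupled, variable-dependent feasible sets $\mathcal{C}_t$ precise: because each $\mathcal{C}_t$ references the other coordinates $\rvc_\tau$, one must verify that the nested constrained maximum is both well-defined (the inner feasible sets are nonempty along the diagonal) and genuinely equal to the joint maximum over $S$, rather than some smaller value produced by an unfavorable order of elimination. The clean resolution above---recognizing that the couplings collapse all admissible tuples onto the diagonal ball---sidesteps this difficulty, after which the two inequalities are one-line consequences of set inclusion and point evaluation.
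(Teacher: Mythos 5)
Your proof is correct and follows essentially the same route as the paper's: interchangeability of nested maxima for the equality in \cref{eq:thm_eq_2}, restriction of the feasible set for the first inequality in \cref{eq:thm_eq_4}, and evaluation at the feasible point $(\rvc_{\mathrm{org}},\dots,\rvc_{\mathrm{org}})$ for the second. Your explicit observation that the coupled constraints collapse the feasible region onto the diagonal ball $\{(\rvc,\dots,\rvc) : \|\rvc-\rvc_{\mathrm{org}}\|_2 \leq \rho\}$ is a detail the paper leaves implicit, but it refines rather than changes the argument.
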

The first equation shows that sequential maximization in \cref{eq:thm_eq_2} (corresponding to Motivation~1) attains the same optimum as the joint maximization. Introducing the constraints in \cref{eq:thm_eq_4} (corresponding to Motivation~2) restricts the feasible set and can lower the optimum. Nonetheless, \cref{thm:main} guarantees that both the unconstrained optimum (\pcref{eq:org_obj}) and the constrained optimum (\pcref{eq:seq_obj}) yield a value at least as high as the fixed embedding. Because DATE is derived by approximating the optimization problem in \cref{eq:seq_obj}, it is expected to improve the text-image alignment of the generated images compared to the fixed embedding.

Next, we present \cref{thm:xt} to illustrate how the DATE update influences the perturbed data.
\begin{restatable}{theorem}{thmxt}
\label{thm:xt}
    The score function for the text embedding $\rvc_t$ updated by DATE can be expressed as:
    \begin{align}
    \Resize{
        \nabla_{\rvx_t} \log  p_{\bm{\theta}}(\rvx_t | \hat{\rvc}_t) = \nabla_{\rvx_t} \log p_{\bm{\theta}} (\rvx_t | \rvc_{\mathrm{org}}) +  \rho \nabla_{\rvx_t} \Big \{  \frac{\nabla_{\bm{c}} h_t(\rvx_t,\rvc_{\mathrm{org}})^T}{|| \nabla_{\bm{c}} h_t(\rvx_t,\rvc_{\mathrm{org}}) ||_2} \nabla_\rvc \log p_{\bm{\theta}}(\rvx_t|\rvc_{\mathrm{org}}) \Big \} + O(\rho^2).}.
    \end{align}
\end{restatable}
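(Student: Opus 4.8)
The plan is to treat the DATE update as a small perturbation of the conditioning variable $\rvc$ of fixed magnitude $\rho$ along a direction that does not itself depend on $\rho$, expand the conditional log-density to first order in $\rho$, and only then differentiate in $\rvx_t$. The key structural observation is that the update rule in \cref{eq:3update3_2} gives $\hat{\rvc}_t = \rvc_{\mathrm{org}} + \rho\,\hat{\bm{u}}$, where
\begin{align}
\hat{\bm{u}} \coloneqq \frac{\nabla_{\bm{c}} h_t(\rvx_t,\rvc_{\mathrm{org}})}{\| \nabla_{\bm{c}} h_t(\rvx_t,\rvc_{\mathrm{org}}) \|_2}
\end{align}
is a unit vector depending on $\rvx_t$ and $\rvc_{\mathrm{org}}$ but carrying no $\rho$-dependence; hence $\rho$ is the only small parameter in the expansion.

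First I would Taylor expand $\log p_{\bm{\theta}}(\rvx_t \mid \rvc_{\mathrm{org}} + \rho\,\hat{\bm{u}})$ to first order in $\rho$ about $\rho = 0$, which is simply the directional derivative of the conditional log-density in its $\rvc$-argument:
\begin{align}
\log p_{\bm{\theta}}(\rvx_t \mid \hat{\rvc}_t) = \log p_{\bm{\theta}}(\rvx_t \mid \rvc_{\mathrm{org}}) + \rho\, \hat{\bm{u}}^T \nabla_{\rvc} \log p_{\bm{\theta}}(\rvx_t \mid \rvc_{\mathrm{org}}) + O(\rho^2).
\end{align}
Substituting the explicit form of $\hat{\bm{u}}$ turns the first-order coefficient into the bracketed expression appearing in the statement.

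Next I would apply $\nabla_{\rvx_t}$ to both sides. Differentiation in $\rvx_t$ is linear, so the zeroth-order term produces $\nabla_{\rvx_t} \log p_{\bm{\theta}}(\rvx_t \mid \rvc_{\mathrm{org}})$ and the first-order term produces exactly $\rho\,\nabla_{\rvx_t}\{\,\cdots\,\}$ with the bracket as written; here the product rule automatically accounts for the joint $\rvx_t$-dependence of $\hat{\bm{u}}$ and of $\nabla_{\rvc}\log p_{\bm{\theta}}$, so no further manipulation is needed. This reproduces the claimed identity once the remainder is shown to stay second order after differentiation.

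The main obstacle is precisely this control of the remainder: I must argue that $\nabla_{\rvx_t}$ of the $O(\rho^2)$ Taylor tail is again $O(\rho^2)$, i.e.\ that the $\rvx_t$-gradient and the remainder order may be interchanged. It suffices to assume that $\log p_{\bm{\theta}}(\rvx_t \mid \cdot)$ is $C^2$ in $\rvc$ near $\rvc_{\mathrm{org}}$ with the relevant mixed derivatives locally bounded uniformly, so that both the second-order term and its $\rvx_t$-gradient are of order $\rho^2$. I would also record as a standing assumption that $\nabla_{\bm{c}} h_t(\rvx_t,\rvc_{\mathrm{org}}) \neq \bm{0}$, so that $\hat{\bm{u}}$ is well-defined — consistent with the normalization already used in \cref{eq:3update3_2}.
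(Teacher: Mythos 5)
Your proposal follows essentially the same route as the paper's proof: a first-order Taylor expansion of $\log p_{\bm{\theta}}(\rvx_t \mid \cdot)$ in the conditioning argument around $\rvc_{\mathrm{org}}$ (equivalently in $\rho$ around $0$), followed by applying $\nabla_{\rvx_t}$ to both sides by linearity. Your extra care in flagging that the $\rvx_t$-gradient of the $O(\rho^2)$ remainder must itself be shown to be $O(\rho^2)$ (requiring local $C^2$ regularity in $\rvc$ with uniformly bounded mixed derivatives) and that $\nabla_{\bm{c}} h_t(\rvx_t,\rvc_{\mathrm{org}}) \neq \bm{0}$ are points the paper leaves implicit, but they refine rather than alter the argument.
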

According to \cref{thm:xt}, the updated text embedding $\rvc_t$ can be interpreted as introducing a guidance term to the original score function, under a sufficiently small $\rho$. This guidance effectively improves the alignment between the evaluation function $h_t$ and the model likelihood from the perspective of the text embedding. Therefore, embedding-based guidance balances semantic alignment with the underlying model distribution, enhancing prompt fidelity without reducing generation quality.

\subsection{Practical implementation}
\label{subsec:3practical}

\textbf{Reducing computational cost}~~
Updating the text embedding via \cref{eq:3update3_2} requires computing the gradient of $h_t$ through both diffusion and evaluation networks, which increases computational cost, as discussed in \cref{app_subsec:exp_coco}. To mitigate this overhead, we update embeddings only at a subset of sampling steps (line 4 in \cref{alg:text_update}) and reuse the last update embeddings between updates, balancing performance and efficiency as shown in \cref{fig:update_percent} of \cref{subsec:qualitative}. In addition, general computationally efficient strategies such as half-precision inference can be applied during sampling. As shown in \cref{tab:fp16}, DATE remains compatible with such efficiency techniques, effectively reducing runtime and GPU memory consumption while incurring only a slight performance degradation.

\textbf{Selection of original text embedding}~~
We explore two strategies for choosing the original text embedding $\rvc_{\mathrm{org}}$ at each update step: (1) always use the pre-trained text encoder output, $\mathbf{I}_{\bm{\phi}}(y)$, preserving semantic integrity, and (2) use the embedding from the previous step, $\rvc_{t+1}$, allowing broader exploration of the embedding space. To prevent semantic drift in the second approach, we add an L2 regularizer $|| \rvc_t - \mathbf{I}_{\bm{\phi}}(y) ||_2$ to the objective in \cref{eq:prop_obj}. This term ensures update embeddings remain close to the original. Ablation results comparing these strategies also appear in \cref{fig:update_percent}.

\begin{table}[t]
    \centering
    \caption{Performance on the COCO validation set. Sampling steps are indicated in parentheses, and \textit{Time} is the average sampling time (min.) for 64 samples. \textbf{Bold} values indicate the best performance.}
    \setlength{\tabcolsep}{4pt}
    \adjustbox{max width=0.95\linewidth}{%
        \begin{tabular}{cl|c|ccc}
            \toprule
            Backbone & Method & Time & FID$\downarrow$ & CLIP score$\uparrow$ & ImageReward$\uparrow$    \\
            \midrule
            \multirow{8}{*}{\shortstack{SD v1.5~\cite{rombach2022high} \\ w/ DDIM~\cite{song2021denoising}}}& Fixed text embedding (50 steps)              & 5.64 & 18.66 & 0.3204 & 0.2132 \\
            & Fixed text embedding (70 steps)   & 7.87 & 18.27 & 0.3199 & 0.2137 \\ 
            & EBCA~\cite{park2023energybased}   & 8.10 & 25.85 & 0.2877 & -0.3128 \\ 
            & Universal Guidance~\cite{bansal2023universal} & 8.25 & 18.56 & 0.3216 & 0.2221 \\
            \cmidrule{2-6}
            & \multicolumn{5}{l}{\textbf{DATE (50 steps)}} \\
            & \quad{{10\% update} {with CLIP score}} & 7.82  & {17.90} & {0.3237} & {0.2364} \\
            & \quad{{all updates} {with CLIP score}} & 24.20  & \textbf{17.22} & \textbf{0.3292} & {0.2277} \\
            & \quad{{10\% update} {with ImageReward}} & 7.82  & {18.61} & {0.3224} & {0.4792} \\
            & \quad{{all updates} {with ImageReward}} & 24.20  & {18.17} & {0.3224} & \textbf{1.2972} \\
            \midrule
            \multirow{5}{*}{\shortstack{PixArt-$\alpha$~\cite{chen2024pixartalpha} \\ w/ DPM- \\ Solver~\cite{lu2022dpm}} }& Fixed text embedding (20 steps)   & 4.35  & 31.07 & 0.3201 & 0.8140 \\
            & Fixed text embedding (45 steps)   & 9.03  & 30.62 & 0.3199 & 0.8174 \\ 
            \cmidrule{2-6}
            & \multicolumn{5}{l}{\textbf{DATE (20 steps)}} \\
            & \quad{50\% update with CLIP score} & 8.93 &  \bf{30.55} & \bf{0.3237} & {0.8287} \\
            & \quad{50\% update with ImageReward} & 8.95 & {31.07} & {0.3221} & \bf{0.9514} \\
            \bottomrule
        \end{tabular}
    }
    \label{tab:main}
\end{table}

\section{Experiments}
\label{sec:3exp}

We evaluate DATE for text-to-image generation primarily using U-Net-based Stable Diffusion (SD) v1.5~\cite{rombach2022high} with a pre-trained CLIP ViT-L/14 text encoder~\cite{radford2021learning}. Additionally, to demonstrate broader applicability, we include evaluations on the latest transformer-based model, PixArt-$\alpha$~\cite{chen2024pixartalpha}.
Unless stated otherwise, we set the text-conditioned evaluation function $h$ to CLIP score, the scale hyperparameter $\rho$ to 0.5, and use the embedding from the previous update as the original text embedding $\rvc_\mathrm{org}$ for each update step. Additional details are in \cref{app_sec:add_exp_setting}.

\begin{figure}[t]
    \centering
    \begin{minipage}[t]{0.69\textwidth}
        \vspace*{0pt}
        \captionof{table}{Results on COCO using SD v1.5 with various evaluation functions. \textbf{Bold} values indicate the best performance, while \textit{italic} values denote cases that underperform the fixed embedding.}
        \adjustbox{max width=\linewidth}{%
        \begin{tabular}{l|ccccc}
        \toprule
        & \multicolumn{2}{c}{\textbf{Fidelity}} & \textbf{Semantic} &  \multicolumn{2}{c}{\textbf{Preference}} \\
        Method & FID$\downarrow$ & AS$\uparrow$ & CS$\uparrow$ & IR$\uparrow$  & PS$\uparrow$   \\
        \midrule
        Fixed embedding (50 steps)  & 18.66 & 5.38 & 0.3204 & 0.2132 & 21.51 \\
        Fixed embedding (70 steps)  & 18.27 & 5.37 & 0.3199 & 0.2137 & 21.50 \\
        \midrule
        \textbf{DATE (50 steps , 10\% update)} \\
        \quad with Aesthetic Score (AS)               & \textit{18.82} & \textbf{5.58} & \textit{0.3169} & \textit{0.1910} & \textit{21.46} \\
        \quad with CLIP Score (CS)               & \textbf{17.90} & \textit{5.35} & \textbf{0.3237} & 0.2364 & 21.53 \\
        \quad with ImageReward (IR)               & 18.61 & 5.40 & 0.3224 & \textbf{0.4792} & 21.53 \\
        \quad with PickScore (PS)               & 18.49 & 5.42 & 0.3225 & 0.2745 & \textbf{21.93} \\
        \bottomrule
        \end{tabular}
        \label{tab:coco_multi}
        }
    \end{minipage}
    \hfill
    \begin{minipage}[t]{0.29\textwidth}
        \vspace*{0pt}
        \centering
        \includegraphics[width=\linewidth]{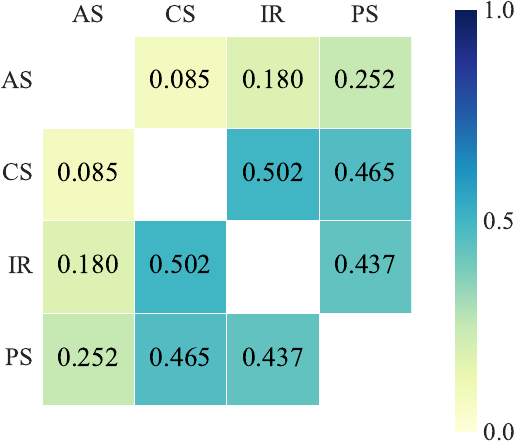}
        \captionof{figure}{Pairwise Pearson correlations between evaluation functions.}
        \label{fig:corr_multi}
    \end{minipage}
\end{figure}

\begin{figure}[t]
    \centering
    \begin{minipage}[t]{0.45\textwidth}
        \vspace*{0pt}
        \captionof{table}{Results on COCO across backbones.}
        \adjustbox{max width=\linewidth}{%
        \begin{tabular}{ll|ccc}
        \toprule
        Backbone & Methods & FID$\downarrow$ & CS$\uparrow$ & IR$\uparrow$    \\
        \midrule
        SD3~\cite{esser2024scaling} & Fixed        & \textbf{26.00} & 0.3337 & 1.0018 \\
                                    & \textbf{DATE (ours)}  & \textbf{26.00} & \textbf{0.3340} & \textbf{1.0457} \\ 
        \midrule
        FLUX~\cite{flux2024} & Fixed        & 29.59 & 0.3257 & 0.9634 \\
                                    & \textbf{DATE (ours)}  & \textbf{29.41} & \textbf{0.3283} & \textbf{0.9768} \\ 
        \midrule
        SDXL~\cite{podell2024sdxl} & Fixed        & 18.27 & 0.3368 & 0.7284 \\
                                    & \textbf{DATE (ours)}  & \textbf{18.03} & \textbf{0.3382} & \textbf{0.9096} \\ 
        \bottomrule
        \end{tabular}
        \label{tab:various_backbone}
        }
    \end{minipage}
    \hfill
    \begin{minipage}[t]{0.54\textwidth}
        \vspace*{0pt}
        \captionof{table}{Results on COCO with half-precision.}
        \adjustbox{max width=\linewidth}{%
        \begin{tabular}{l|cc|ccc}
        \toprule
        Method & Time & Memory & FID$\downarrow$ & CS$\uparrow$ & IR$\uparrow$  \\
        \midrule
        Fixed embedding (50 steps)  & 5.64 & 24.0 & 18.66 & 0.3204 & 0.2132 \\
        Fixed embedding (70 steps)  & 7.87 & 24.0 & 18.27 & 0.3199 & 0.2137 \\
        \midrule
        \textbf{DATE (50 steps , 10\% update)} \\
        \quad with CLIP Score (CS)               & 7.82 & 61.5 & 17.90 & 0.3237 & 0.2364 \\
        \quad with CLIP Score (CS) (FP16)        & 4.40 & 32.9 & 17.99 & 0.3229 & 0.2265 \\
        \quad with ImageReward (IR)               & 7.82 & 61.5 & 18.61 & 0.3224 & 0.4792 \\
        \quad with ImageReward (IR) (FP16)        & 4.02 & 30.6 & 18.03 & 0.3222 & 0.4773 \\
        \bottomrule
        \end{tabular}
        \label{tab:fp16}
        }
    \end{minipage}
\end{figure}

\begin{figure}[t]
    \centering
        \begin{minipage}[t]{0.63\textwidth}
        \vspace*{0pt}
        \begin{subfigure}[b]{0.49\linewidth}
            \centering
            \includegraphics[width=\linewidth]{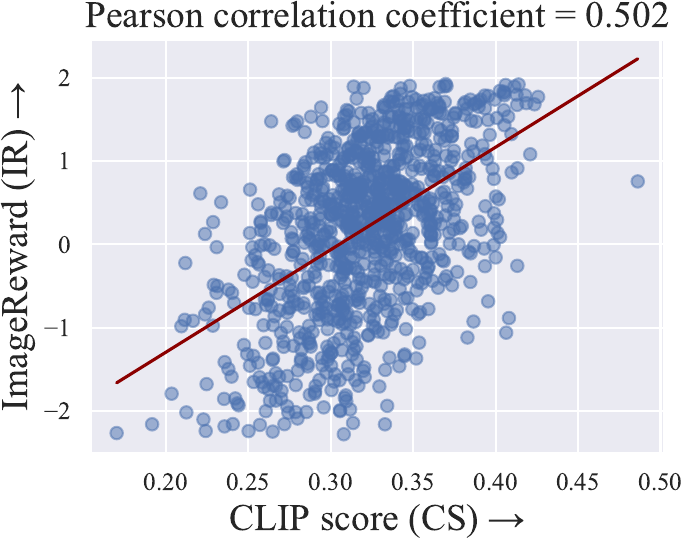}
            \caption{Correlation b/w metrics}
            \label{subfig:corr}
        \end{subfigure}
        \hfill 
        \begin{subfigure}[b]{0.48\linewidth}
            \centering
            \includegraphics[width=\linewidth]{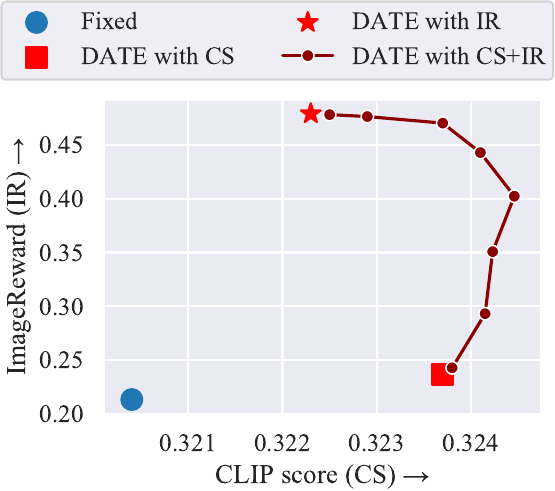}
            \caption{Results of combined metrics}
            \label{subfig:clip_ir_perf}
        \end{subfigure}
        \caption{(a) Correlation between CLIP score (CS) and ImageReward (IR). Blue dots represent individual samples, and the red line indicates their linear regression line. (b) Trade-off between CS and IR. For \textit{DATE with CS+IR}, we use a weighted sum of CS and IR as the evaluation function $h$, and we plot the performance changes as the weights are varied.}
        \label{fig:clip_ir}
    \end{minipage}
    \hfill
    \begin{minipage}[t]{0.35\textwidth}
        \vspace*{0pt}
        \centering
        \includegraphics[width=\linewidth]{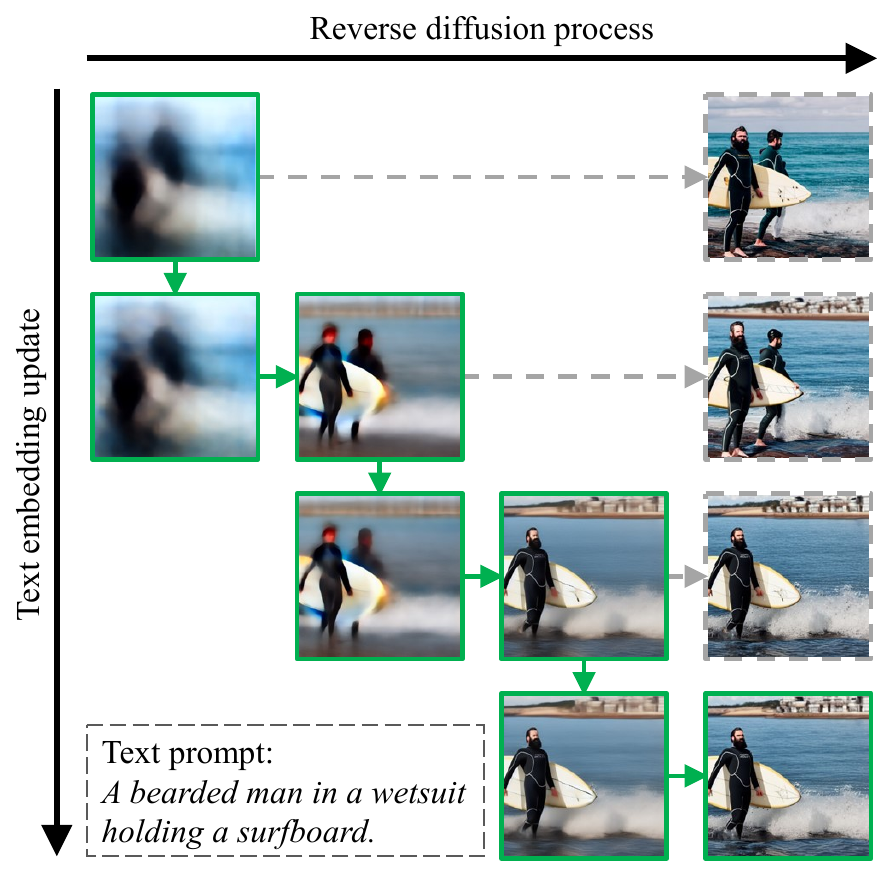}
        \captionof{figure}{Mean predicted images $\bar{\rvx}_0(\rvx_t,\rvc_t)$ for each diffusion timestep and text embedding during the sampling process with DATE.}
        \label{fig:process}
    \end{minipage}
\end{figure}

\subsection{Quantitative results}
\label{subsec:quantitative}

Following previous evaluation protocols~\cite{na2024diffusion,nichol2022glide,xu2023restart}, we generate 5,000 images from randomly sampled captions in the COCO~\cite{lin2014microsoft} validation set. We use DDIM~\cite{song2021denoising} and DDPM~\cite{ho2020denoising} for SD, and DPM-Solver~\cite{lu2022dpm} for PixArt-$\alpha$. We evaluate the image quality and semantic alignment using several metrics, including zero-shot FID~\cite{heusel2017gans}, CLIP score (CS)~\cite{hessel2021clipscore}, and ImageReward (IR)~\cite{xu2023imagereward}, with detailed explanations provided in \cref{app_subsec:setting_coco}.

\textbf{Main results}~~
\cref{tab:main} compares the performance of different text embeddings across various backbones and samplers. \cref{tab:sd_ddpm,tab:sd_full} in \cref{app_subsec:exp_coco} further extend this comparison to different classifier-free guidance scales and sampling methods. Across all settings, DATE consistently outperforms the fixed text embedding, even with matched sampling time for a fair comparison.
Notably, DATE improves the metric used for its evaluation function $h$, as well as other metrics, suggesting that it enhances the overall text-conditional generation quality beyond optimizing a single objective.

Compared to recent methods, EBCA~\cite{park2023energybased} applies the energy-based optimization to text embeddings within cross-attention layers but yields suboptimal performance. Universal Guidance~\cite{bansal2023universal}, which injects $h_t$-based guidance directly into the data space, yields performance gains but still falls short of DATE. In contrast, DATE explicitly updates text embeddings to optimize the evaluation objective, achieving better semantic alignment (CS, IR) while preserving generative quality (FID), as supported by our theoretical analysis.

\textbf{Evaluation function}~~
To better understand the role of evaluation functions, we analyze DATE under different functions. Specifically, we examine pairwise correlations among different functions and evaluate DATE when each metric, individually or in combination, serves as the evaluation function. We consider Aesthetic Score (AS)~\cite{schuhmann2022laion} for image fidelity, CS for semantic alignment, and two human-preference-based metrics, IR and PickScore (PS)~\cite{kirstain2023pick}. Each provides per-instance scores for generated images, enabling both correlation analysis and direct integration with DATE.\footnote{FID is excluded since it measures distribution-level similarity and lacks instance-level scores.}

\cref{fig:corr_multi} presents Pearson correlations computed from 1,000 Stable Diffusion samples, and \cref{subfig:corr} visualizes instance-level relationships between CS and IR. AS and CS exhibit minimal correlation, suggesting independence between aesthetic quality and semantic alignment. IR and PS correlate moderately with CS and weakly with AS, indicating that they capture both fidelity and alignment aspects. IR and PS themselves correlate moderately, reflecting their distinct sensitivities.

With these metrics as the evaluation function, DATE generally improves all metrics over the fixed embedding, as shown in \cref{tab:coco_multi}. However, when AS is used, other metrics often degrade, consistent with its low semantic correlation. Moreover, combining multiple evaluation functions, such as a weighted sum of CS and IR, improves both metrics simultaneously, as shown in \cref{subfig:clip_ir_perf} and \cref{tab:coco_multi_extended}. DATE thus remains compatible with multi-metric objectives, where adjusting weights balances trade-off. Interestingly, the combined objective can yield even higher CS than for CS alone, demonstrating the synergistic potential among evaluation functions during test-time optimization.

\textbf{Applicability to other backbones}~~
DATE can be applied on top of powerful base models. We evaluate it on recent architectures, including SD3~\cite{esser2024scaling}, FLUX~\cite{flux2024}, and SDXL~\cite{podell2024sdxl}, following the default configurations described in \cref{app_subsec:setting_coco}. As shown in \cref{tab:various_backbone}, DATE consistently improves text-image alignment and generation quality across all these models, demonstrating its robustness and broad applicability to modern diffusion architectures.

\textbf{Computational efficiency}~~
To mitigate the increased computational cost introduced by gradient computations, we explore memory-efficient sampling strategies. In particular, we apply half-precision inference during sampling, which substantially reduces runtime and memory consumption while maintaining competitive performance, as shown in \cref{tab:fp16}. Notably, casting the CLIP model used in the evaluation function to half-precision led to performance degradation, so we retain it in full precision; however, since the diffusion model dominates computational cost, applying half-precision to remaining components still provides significant savings. These results demonstrate that DATE remains fully compatible with standard memory-efficient strategies.

\subsection{Analysis of DATE}
\label{subsec:qualitative}

\textbf{Generation process}~~
The green boxes in \cref{fig:process} show the generation process of DATE. Fixed text embeddings misinterpret `\textit{a man}' as `\textit{two men}', but DATE corrects this by dynamic updates.

\textbf{Time- and instance-adaptive text embedding}~~
We analyze the time-dependent text embeddings by measuring the cosine similarity between update directions $\hat{\bm{\epsilon}}_t$ at different timesteps, averaged over 100 samples, in \cref{fig:cos_sim_time}. Most timestep pairs show near-zero similarity, with about 85\% of pairs below 0.1, indicating that optimal embeddings differ across timesteps. In contrast, adjacent timesteps generally show positive similarity, suggesting that reusing the embedding from the previous step can reduce runtime with a moderate loss in performance.

We also examine instance-specific adaptation by measuring the cosine similarity of update directions across different samples of the same prompt at each timestep. The similarity remains close to zero (below 0.05) across all timesteps, showing that each instance has distinct text embedding updates, reinforcing the need for adaptive embeddings in text-to-image generation.

\begin{figure}[t]
    \centering
    \begin{minipage}[t]{0.4\textwidth}
        \vspace*{0pt}
        \captionof{table}{Ablation studies.}
        \setlength{\tabcolsep}{5pt}
        \adjustbox{max width=\linewidth}{%
        \begin{tabular}{l|ccc}
        \toprule
        Method & FID$\downarrow$ & CS$\uparrow$ & IR$\uparrow$    \\
        \midrule
        Fixed embedding             & 18.66 & 0.3204 & 0.2132 \\
        Random update             & 18.66 & 0.3204 & 0.2136 \\ 
        $h(\rvx_t;y)$             & 18.80 & 0.3200 & 0.2121 \\ 
        Unnormalized gradient     & 18.46 & 0.3212 & 0.2225 \\ 
        \textbf{DATE (ours)}              & \textbf{17.91} & \textbf{0.3220} & \textbf{0.2229} \\
        \bottomrule
        \end{tabular}
        \label{tab:abl}
        }
    \end{minipage}
    \begin{minipage}[t]{0.58\textwidth}
        \vspace*{0pt}
        \centering
        \includegraphics[width=\linewidth]{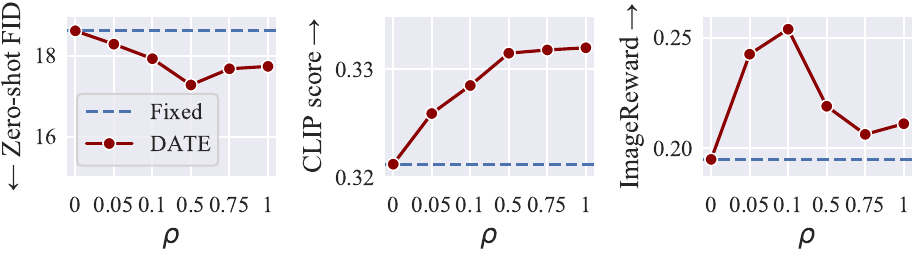}
        \caption{Sensitivity analysis on the scale $\rho$.}
        \label{fig:rho}
    \end{minipage}
    \vspace{-1em}
\end{figure}

\begin{figure*}[t]
    \centering
    \begin{minipage}[t]{0.26\textwidth}
            \centering
            \includegraphics[width=\linewidth]{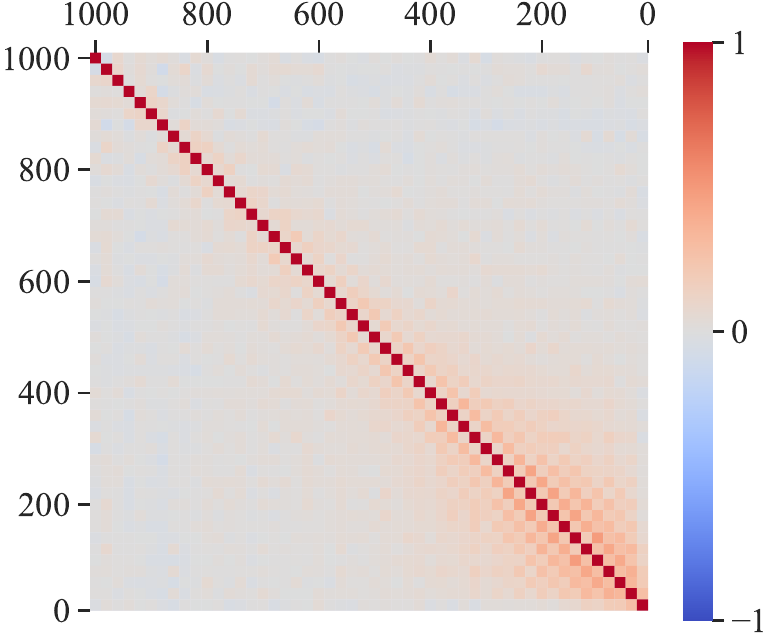}
            \captionof{figure}{Cosine similarity of update directions $\hat{\bm{\epsilon}}_t$ between timesteps.}
            \label{fig:cos_sim_time}
    \end{minipage}
    \hfill
    \begin{minipage}[t]{0.71\textwidth}
            \centering
            \includegraphics[width=\linewidth]{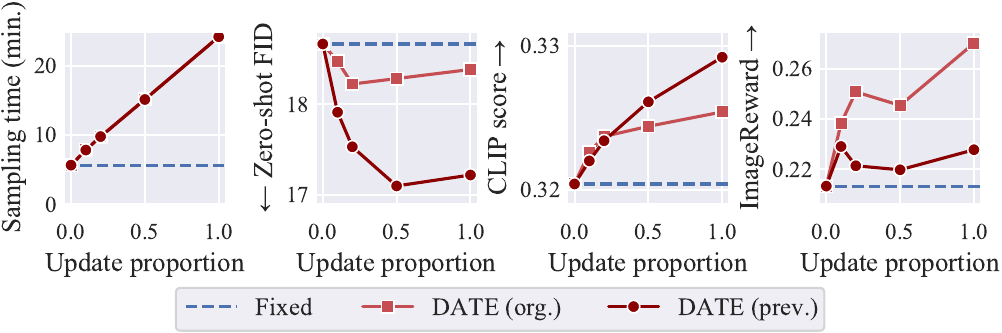}
            \caption{Comparison of sampling time and performance based on the proportion of update steps to total sampling steps.}
            \label{fig:update_percent}
    \end{minipage}
    \vspace{-1em}
\end{figure*}

\textbf{Ablation studies}~~
\cref{tab:abl} presents several ablations for the text embedding update. \textit{Random update}, which replaces the gradient with a random Gaussian vector, performs similarly to the fixed embedding. This indicates that our update is not just a perturbation, but plays a meaningful role in text-image alignment. Alignment with perturbed data, which aligns the text prompt with the perturbed data $\rvx_t$ using $h(\rvx_t;y)$, results in degraded performance, likely because the evaluation function $h$ lacks explicit information in the perturbed data space. \textit{Unnormalized gradient} performs better than fixed embeddings but remains inferior to DATE. This suggests that using the unnormalized gradient still serves as a gradient ascent method to improve $h_t$, but a single-step update is suboptimal.

\textbf{Sensitivity analysis of $\rho$}~~
\cref{fig:rho} presents a sensitivity analysis of the scale hyperparameter $\rho$, which controls the magnitude of $\hat{\bm{\epsilon}}_t$. The performance consistently outperforms the fixed embedding, but higher $\rho$ causes degradation in some regions. This is likely due to errors in the Taylor approximation from an expanded feasible region in \cref{eq:3update2_2}. Based on this, we set $\rho$ to 0.5 in our experiments.

\textbf{Selection of original embedding}~~
\cref{fig:update_percent} analyzes the effect of selecting the original embedding at each update step, discussed in \cref{subsec:3practical}. Initializing with the previously updated embedding generally improves the CLIP score, likely due to a broader exploration of the embedding space. Based on this, we adopt this strategy in our experiments.

\textbf{Embedding update steps}~~
\cref{fig:update_percent} also shows the sampling time and performance based on the number of update steps. We observe that even a few updates improve performance. Increasing the update proportion tends to further improve performance, but it also increases the sampling time.

\cref{fig:update_step} compares different update strategies while keeping the total number of updates the same, using ImageReward as the objective.
Here, $\rvc^{\mathrm{org}}$ is the fixed embedding; $\rvc^{\text{u}}$ refers uniform updates; and $\rvc^{\text{e}}$, $\rvc^{\text{m}}$, and $\rvc^{\text{l}}$ correspond to updates at early, middle, and late sampling steps, respectively.
We find that updating at any sets improves performance over no update, with mid-to-late updates ($\rvc^{\text{m}}$ and $\rvc^{\text{l}}$) being more effective for text-image alignment. This suggests that adjusting text embeddings during the fine-grained detail refinement phase in the later sampling steps is more effective.

\begin{figure}[t]
    \centering
    \begin{minipage}[t]{0.62\textwidth}
            \centering
                \includegraphics[width=\linewidth]{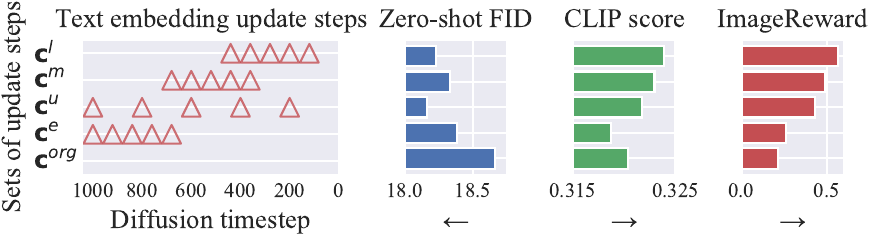}
            \caption{Comparison of update steps. On the left, the red triangles mark update steps, while the bars in the remaining graphs show the performance for each set of update steps.}
            \label{fig:update_step}
    \end{minipage}
    \hfill
    \begin{minipage}[t]{0.35\textwidth}
            \centering
            \includegraphics[width=\linewidth]{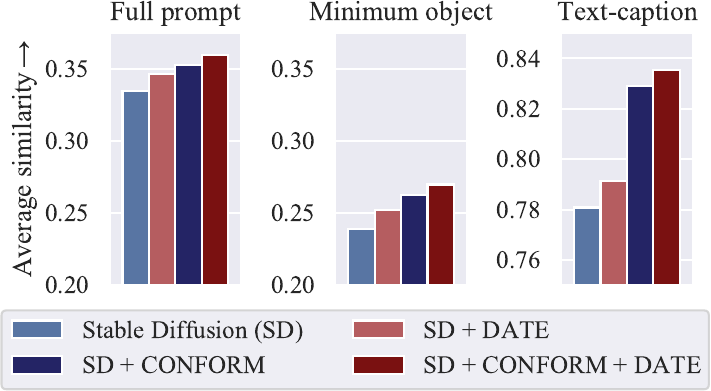}
            \caption{Average similarity on the AnE dataset~\cite{chefer2023attend} for multi-concept generation.}
            \label{fig:ane}
    \end{minipage}
    \vspace{-3em}
\end{figure}

\begin{minipage}[t]{0.47\textwidth}
\begin{figure}[H]
\centering
\includegraphics[width=\linewidth]{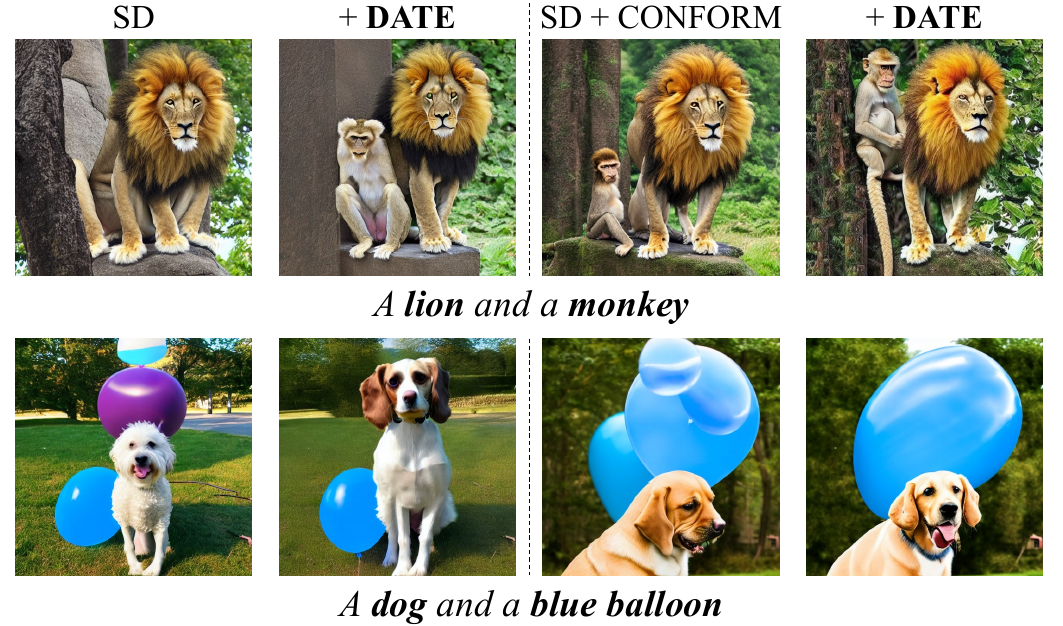}
\caption{Examples of generated images on the AnE dataset for multi-concept generation.}
\label{fig:example_ane}
\end{figure}
\end{minipage}
\hfill
\begin{minipage}[t]{0.51\textwidth}
\begin{figure}[H]
\centering
\begin{subfigure}[b]{0.58\linewidth}
\centering
\includegraphics[width=\linewidth]{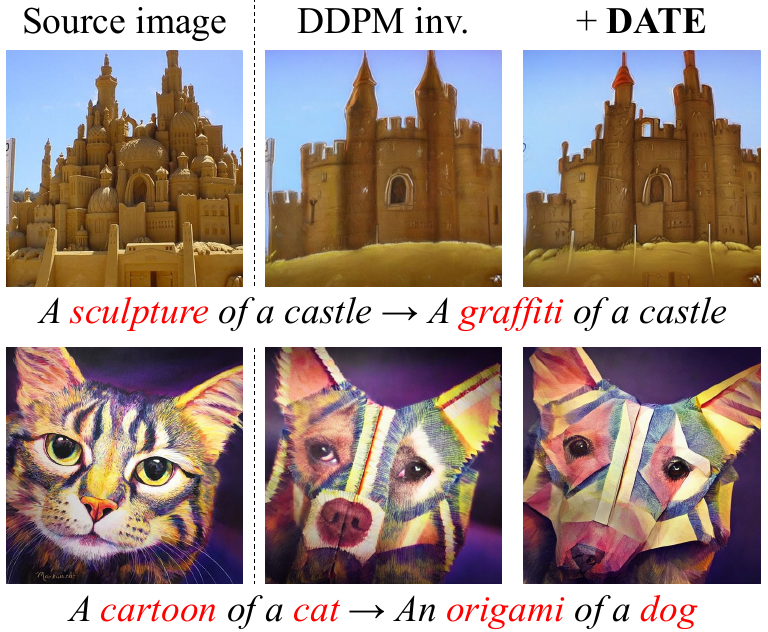}
\caption{Examples of edited images}
\label{fig:example_edit}
\end{subfigure}
\hfill 
\begin{subfigure}[b]{0.34\linewidth}
\centering
\includegraphics[width=\linewidth]{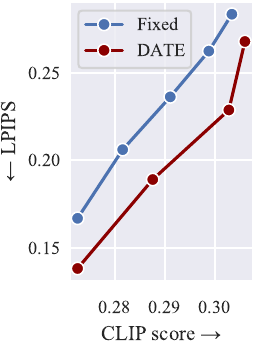}
\caption{LPIPS vs. CS}
\label{fig:edit_lpips_clip}
\end{subfigure}
\caption{Fixed embedding versus DATE in DDPM inversion~\cite{huberman2024edit} for text-guided image editing.}
\label{fig:edit}
\end{figure}
\vspace{-2em}
\end{minipage}

\subsection{Applications}
\label{subsec:application}
\textbf{Multi-concept generation}~~
Multi-concept generation aims to generate multiple concepts (e.g., objects and attributes) within a single text prompt. We evaluate DATE on the AnE dataset~\cite{chefer2023attend} with 1) base SD and 2) SD with CONFORM~\cite{meral2024conform}, a method for multi-concept generation.
Following previous work~\citep{chefer2023attend,meral2024conform}, we generate 64 images per prompt and evaluate similarity between text and images using various metrics.
Additional details are provided in \cref{app_subsec:setting_multiconcept}.

\cref{fig:ane} compares performance with and without DATE and shows that applying DATE consistently improves performance across all metrics. CONFORM provides better performance than DATE alone, but it requires explicit annotation of attribute-object pairs, and applying DATE to CONFORM can further improve performance. \cref{fig:example_ane} illustrates that DATE improves object representation under the same prompt and random seed. These results highlight the effectiveness of DATE in refining concept representation and improving text-image alignment. Additional results are in \cref{app_subsec:exp_application}.

\textbf{Text-guided image editing}~~
Text-guided image editing modifies an input image based on text prompts, allowing natural language adjustments~\citep{meng2022sdedit,mokady2023null}. We apply DATE to DDPM inversion~\citep{huberman2024edit}, a diffusion-based image editing model; and evaluate it on 30 source images from ImageNet-R-TI2I~\citep{tumanyan2023plug}, each modified with five target prompts. Evaluation is based on LPIPS~\citep{zhang2018unreasonable} for perceptual similarity with the source image and CLIP score for text-image alignment with the target prompt.

\cref{fig:edit_lpips_clip} compares LPIPS and CLIP score across different guidance scales. The result shows that DATE achieves a better trade-off between fidelity to the source image and alignment with the target text. \cref{fig:example_edit} presents edited images for each method. These results suggest that DATE improves text-guided image editing by better balancing content preservation and textual modifications.

\section{Conclusion}
\label{sec:3conc}

We propose Diffusion Adaptive Text Embedding (DATE), which improves text-to-image diffusion models by dynamically refining text embeddings throughout the diffusion sampling process. Unlike conventional methods with fixed embeddings from a frozen text encoder, DATE adaptively updates text representations at intermediate steps, effectively addressing semantic misinterpretations and improving text-image alignment. Experiments show that DATE consistently outperforms fixed embeddings across various tasks and methods involving text-to-image diffusion models, highlighting the potential of time- and instance-dependent text embeddings to improve text-to-image generation.

\begin{ack}


This work was supported by the IITP (Institute of Information \& Communications Technology Planning \& Evaluation)-ITRC (Information Technology Research Center) grant funded by the Korea government (Ministry of Science and ICT) (IITP-2025-RS-2024-00437268).

\end{ack}

\bibliography{main}
\bibliographystyle{plain}

\newpage
\section*{NeurIPS Paper Checklist}

\begin{enumerate}

\item {\bf Claims}
    \item[] Question: Do the main claims made in the abstract and introduction accurately reflect the paper's contributions and scope?
    \item[] Answer: \answerYes{} 
    \item[] Justification:  The abstract and introduction clearly state the scope of DATE, which aligns with the contributions experiments presented throughout the paper. 
    \item[] Guidelines:
    \begin{itemize}
        \item The answer NA means that the abstract and introduction do not include the claims made in the paper.
        \item The abstract and/or introduction should clearly state the claims made, including the contributions made in the paper and important assumptions and limitations. A No or NA answer to this question will not be perceived well by the reviewers. 
        \item The claims made should match theoretical and experimental results, and reflect how much the results can be expected to generalize to other settings. 
        \item It is fine to include aspirational goals as motivation as long as it is clear that these goals are not attained by the paper. 
    \end{itemize}

\item {\bf Limitations}
    \item[] Question: Does the paper discuss the limitations of the work performed by the authors?
    \item[] Answer: \answerYes{} 
    \item[] Justification: Limitations are discussed in \cref{app_sec:limit}.
    \item[] Guidelines:
    \begin{itemize}
        \item The answer NA means that the paper has no limitation while the answer No means that the paper has limitations, but those are not discussed in the paper. 
        \item The authors are encouraged to create a separate "Limitations" section in their paper.
        \item The paper should point out any strong assumptions and how robust the results are to violations of these assumptions (e.g., independence assumptions, noiseless settings, model well-specification, asymptotic approximations only holding locally). The authors should reflect on how these assumptions might be violated in practice and what the implications would be.
        \item The authors should reflect on the scope of the claims made, e.g., if the approach was only tested on a few datasets or with a few runs. In general, empirical results often depend on implicit assumptions, which should be articulated.
        \item The authors should reflect on the factors that influence the performance of the approach. For example, a facial recognition algorithm may perform poorly when image resolution is low or images are taken in low lighting. Or a speech-to-text system might not be used reliably to provide closed captions for online lectures because it fails to handle technical jargon.
        \item The authors should discuss the computational efficiency of the proposed algorithms and how they scale with dataset size.
        \item If applicable, the authors should discuss possible limitations of their approach to address problems of privacy and fairness.
        \item While the authors might fear that complete honesty about limitations might be used by reviewers as grounds for rejection, a worse outcome might be that reviewers discover limitations that aren't acknowledged in the paper. The authors should use their best judgment and recognize that individual actions in favor of transparency play an important role in developing norms that preserve the integrity of the community. Reviewers will be specifically instructed to not penalize honesty concerning limitations.
    \end{itemize}

\item {\bf Theory assumptions and proofs}
    \item[] Question: For each theoretical result, does the paper provide the full set of assumptions and a complete (and correct) proof?
    \item[] Answer: \answerYes{} 
    \item[] Justification: The assumptions and proofs are given in \cref{app_sec:proof}.
    \item[] Guidelines:
    \begin{itemize}
        \item The answer NA means that the paper does not include theoretical results. 
        \item All the theorems, formulas, and proofs in the paper should be numbered and cross-referenced.
        \item All assumptions should be clearly stated or referenced in the statement of any theorems.
        \item The proofs can either appear in the main paper or the supplemental material, but if they appear in the supplemental material, the authors are encouraged to provide a short proof sketch to provide intuition. 
        \item Inversely, any informal proof provided in the core of the paper should be complemented by formal proofs provided in appendix or supplemental material.
        \item Theorems and Lemmas that the proof relies upon should be properly referenced. 
    \end{itemize}

    \item {\bf Experimental result reproducibility}
    \item[] Question: Does the paper fully disclose all the information needed to reproduce the main experimental results of the paper to the extent that it affects the main claims and/or conclusions of the paper (regardless of whether the code and data are provided or not)?
    \item[] Answer: \answerYes{} 
    \item[] Justification: Please check details in \cref{sec:3exp} and \cref{app_sec:add_exp_setting}. 
    \item[] Guidelines:
    \begin{itemize}
        \item The answer NA means that the paper does not include experiments.
        \item If the paper includes experiments, a No answer to this question will not be perceived well by the reviewers: Making the paper reproducible is important, regardless of whether the code and data are provided or not.
        \item If the contribution is a dataset and/or model, the authors should describe the steps taken to make their results reproducible or verifiable. 
        \item Depending on the contribution, reproducibility can be accomplished in various ways. For example, if the contribution is a novel architecture, describing the architecture fully might suffice, or if the contribution is a specific model and empirical evaluation, it may be necessary to either make it possible for others to replicate the model with the same dataset, or provide access to the model. In general. releasing code and data is often one good way to accomplish this, but reproducibility can also be provided via detailed instructions for how to replicate the results, access to a hosted model (e.g., in the case of a large language model), releasing of a model checkpoint, or other means that are appropriate to the research performed.
        \item While NeurIPS does not require releasing code, the conference does require all submissions to provide some reasonable avenue for reproducibility, which may depend on the nature of the contribution. For example
        \begin{enumerate}
            \item If the contribution is primarily a new algorithm, the paper should make it clear how to reproduce that algorithm.
            \item If the contribution is primarily a new model architecture, the paper should describe the architecture clearly and fully.
            \item If the contribution is a new model (e.g., a large language model), then there should either be a way to access this model for reproducing the results or a way to reproduce the model (e.g., with an open-source dataset or instructions for how to construct the dataset).
            \item We recognize that reproducibility may be tricky in some cases, in which case authors are welcome to describe the particular way they provide for reproducibility. In the case of closed-source models, it may be that access to the model is limited in some way (e.g., to registered users), but it should be possible for other researchers to have some path to reproducing or verifying the results.
        \end{enumerate}
    \end{itemize}

\item {\bf Open access to data and code}
    \item[] Question: Does the paper provide open access to the data and code, with sufficient instructions to faithfully reproduce the main experimental results, as described in supplemental material?
    \item[] Answer: \answerYes{} 
    \item[] Justification: The paper indicates that the implementation is publicly available and provides sufficient experimental details in \cref{app_sec:add_exp_setting}. It relies on publicly available models such as Stable Diffusion v1.5 and CLIP. 
    \item[] Guidelines:
    \begin{itemize}
        \item The answer NA means that paper does not include experiments requiring code.
        \item Please see the NeurIPS code and data submission guidelines (\url{https://nips.cc/public/guides/CodeSubmissionPolicy}) for more details.
        \item While we encourage the release of code and data, we understand that this might not be possible, so “No” is an acceptable answer. Papers cannot be rejected simply for not including code, unless this is central to the contribution (e.g., for a new open-source benchmark).
        \item The instructions should contain the exact command and environment needed to run to reproduce the results. See the NeurIPS code and data submission guidelines (\url{https://nips.cc/public/guides/CodeSubmissionPolicy}) for more details.
        \item The authors should provide instructions on data access and preparation, including how to access the raw data, preprocessed data, intermediate data, and generated data, etc.
        \item The authors should provide scripts to reproduce all experimental results for the new proposed method and baselines. If only a subset of experiments are reproducible, they should state which ones are omitted from the script and why.
        \item At submission time, to preserve anonymity, the authors should release anonymized versions (if applicable).
        \item Providing as much information as possible in supplemental material (appended to the paper) is recommended, but including URLs to data and code is permitted.
    \end{itemize}

\item {\bf Experimental setting/details}
    \item[] Question: Does the paper specify all the training and test details (e.g., data splits, hyperparameters, how they were chosen, type of optimizer, etc.) necessary to understand the results?
    \item[] Answer: \answerYes{} 
    \item[] Justification: Please check details in \cref{sec:3exp} and \cref{app_sec:add_exp_setting}.
    \item[] Guidelines:
    \begin{itemize}
        \item The answer NA means that the paper does not include experiments.
        \item The experimental setting should be presented in the core of the paper to a level of detail that is necessary to appreciate the results and make sense of them.
        \item The full details can be provided either with the code, in appendix, or as supplemental material.
    \end{itemize}

\item {\bf Experiment statistical significance}
    \item[] Question: Does the paper report error bars suitably and correctly defined or other appropriate information about the statistical significance of the experiments?
    \item[] Answer: \answerYes{} 
    \item[] Justification: Statistical significance of the experimental results for the main claim are provided in \cref{app_subsec:exp_coco}.
    \item[] Guidelines:
    \begin{itemize}
        \item The answer NA means that the paper does not include experiments.
        \item The authors should answer "Yes" if the results are accompanied by error bars, confidence intervals, or statistical significance tests, at least for the experiments that support the main claims of the paper.
        \item The factors of variability that the error bars are capturing should be clearly stated (for example, train/test split, initialization, random drawing of some parameter, or overall run with given experimental conditions).
        \item The method for calculating the error bars should be explained (closed form formula, call to a library function, bootstrap, etc.)
        \item The assumptions made should be given (e.g., Normally distributed errors).
        \item It should be clear whether the error bar is the standard deviation or the standard error of the mean.
        \item It is OK to report 1-sigma error bars, but one should state it. The authors should preferably report a 2-sigma error bar than state that they have a 96\% CI, if the hypothesis of Normality of errors is not verified.
        \item For asymmetric distributions, the authors should be careful not to show in tables or figures symmetric error bars that would yield results that are out of range (e.g. negative error rates).
        \item If error bars are reported in tables or plots, The authors should explain in the text how they were calculated and reference the corresponding figures or tables in the text.
    \end{itemize}

\item {\bf Experiments compute resources}
    \item[] Question: For each experiment, does the paper provide sufficient information on the computer resources (type of compute workers, memory, time of execution) needed to reproduce the experiments?
    \item[] Answer: \answerYes{} 
    \item[] Justification: Please check \cref{app_subsec:setting_coco} for details. 
    \item[] Guidelines:
    \begin{itemize}
        \item The answer NA means that the paper does not include experiments.
        \item The paper should indicate the type of compute workers CPU or GPU, internal cluster, or cloud provider, including relevant memory and storage.
        \item The paper should provide the amount of compute required for each of the individual experimental runs as well as estimate the total compute. 
        \item The paper should disclose whether the full research project required more compute than the experiments reported in the paper (e.g., preliminary or failed experiments that didn't make it into the paper). 
    \end{itemize}
    
\item {\bf Code of ethics}
    \item[] Question: Does the research conducted in the paper conform, in every respect, with the NeurIPS Code of Ethics \url{https://neurips.cc/public/EthicsGuidelines}?
    \item[] Answer: \answerYes{} 
    \item[] Justification: The research is consistent with the NeurIPS Code of Ethics.
    \item[] Guidelines:
    \begin{itemize}
        \item The answer NA means that the authors have not reviewed the NeurIPS Code of Ethics.
        \item If the authors answer No, they should explain the special circumstances that require a deviation from the Code of Ethics.
        \item The authors should make sure to preserve anonymity (e.g., if there is a special consideration due to laws or regulations in their jurisdiction).
    \end{itemize}

\item {\bf Broader impacts}
    \item[] Question: Does the paper discuss both potential positive societal impacts and negative societal impacts of the work performed?
    \item[] Answer: \answerYes{} 
    \item[] Justification: Positive societal impacts and negative impacts are discussed in \cref{app_sec:limit}.
    \item[] Guidelines:
    \begin{itemize}
        \item The answer NA means that there is no societal impact of the work performed.
        \item If the authors answer NA or No, they should explain why their work has no societal impact or why the paper does not address societal impact.
        \item Examples of negative societal impacts include potential malicious or unintended uses (e.g., disinformation, generating fake profiles, surveillance), fairness considerations (e.g., deployment of technologies that could make decisions that unfairly impact specific groups), privacy considerations, and security considerations.
        \item The conference expects that many papers will be foundational research and not tied to particular applications, let alone deployments. However, if there is a direct path to any negative applications, the authors should point it out. For example, it is legitimate to point out that an improvement in the quality of generative models could be used to generate deepfakes for disinformation. On the other hand, it is not needed to point out that a generic algorithm for optimizing neural networks could enable people to train models that generate Deepfakes faster.
        \item The authors should consider possible harms that could arise when the technology is being used as intended and functioning correctly, harms that could arise when the technology is being used as intended but gives incorrect results, and harms following from (intentional or unintentional) misuse of the technology.
        \item If there are negative societal impacts, the authors could also discuss possible mitigation strategies (e.g., gated release of models, providing defenses in addition to attacks, mechanisms for monitoring misuse, mechanisms to monitor how a system learns from feedback over time, improving the efficiency and accessibility of ML).
    \end{itemize}
    
\item {\bf Safeguards}
    \item[] Question: Does the paper describe safeguards that have been put in place for responsible release of data or models that have a high risk for misuse (e.g., pretrained language models, image generators, or scraped datasets)?
    \item[] Answer: \answerYes{} 
    \item[] Justification: We discuss the misuse risk of the generative model and mention the safeguards used by the base model, in \cref{app_sec:limit}.
    \item[] Guidelines:
    \begin{itemize}
        \item The answer NA means that the paper poses no such risks.
        \item Released models that have a high risk for misuse or dual-use should be released with necessary safeguards to allow for controlled use of the model, for example by requiring that users adhere to usage guidelines or restrictions to access the model or implementing safety filters. 
        \item Datasets that have been scraped from the Internet could pose safety risks. The authors should describe how they avoided releasing unsafe images.
        \item We recognize that providing effective safeguards is challenging, and many papers do not require this, but we encourage authors to take this into account and make a best faith effort.
    \end{itemize}

\item {\bf Licenses for existing assets}
    \item[] Question: Are the creators or original owners of assets (e.g., code, data, models), used in the paper, properly credited and are the license and terms of use explicitly mentioned and properly respected?
    \item[] Answer: \answerYes{} 
    \item[] Justification: The paper cites the original sources of all datasets and pre-trained models used.
    \item[] Guidelines:
    \begin{itemize}
        \item The answer NA means that the paper does not use existing assets.
        \item The authors should cite the original paper that produced the code package or dataset.
        \item The authors should state which version of the asset is used and, if possible, include a URL.
        \item The name of the license (e.g., CC-BY 4.0) should be included for each asset.
        \item For scraped data from a particular source (e.g., website), the copyright and terms of service of that source should be provided.
        \item If assets are released, the license, copyright information, and terms of use in the package should be provided. For popular datasets, \url{paperswithcode.com/datasets} has curated licenses for some datasets. Their licensing guide can help determine the license of a dataset.
        \item For existing datasets that are re-packaged, both the original license and the license of the derived asset (if it has changed) should be provided.
        \item If this information is not available online, the authors are encouraged to reach out to the asset's creators.
    \end{itemize}

\item {\bf New assets}
    \item[] Question: Are new assets introduced in the paper well documented and is the documentation provided alongside the assets?
    \item[] Answer: \answerYes{} 
    \item[] Justification: The code and model checkpoints is publicly available at \url{https://github.com/aailab-kaist/DATE}.
    \item[] Guidelines:
    \begin{itemize}
        \item The answer NA means that the paper does not release new assets.
        \item Researchers should communicate the details of the dataset/code/model as part of their submissions via structured templates. This includes details about training, license, limitations, etc. 
        \item The paper should discuss whether and how consent was obtained from people whose asset is used.
        \item At submission time, remember to anonymize your assets (if applicable). You can either create an anonymized URL or include an anonymized zip file.
    \end{itemize}

\item {\bf Crowdsourcing and research with human subjects}
    \item[] Question: For crowdsourcing experiments and research with human subjects, does the paper include the full text of instructions given to participants and screenshots, if applicable, as well as details about compensation (if any)? 
    \item[] Answer: \answerNA{} 
    \item[] Justification: This paper does not include crowdsourcing. 
    \item[] Guidelines:
    \begin{itemize}
        \item The answer NA means that the paper does not involve crowdsourcing nor research with human subjects.
        \item Including this information in the supplemental material is fine, but if the main contribution of the paper involves human subjects, then as much detail as possible should be included in the main paper. 
        \item According to the NeurIPS Code of Ethics, workers involved in data collection, curation, or other labor should be paid at least the minimum wage in the country of the data collector. 
    \end{itemize}

\item {\bf Institutional review board (IRB) approvals or equivalent for research with human subjects}
    \item[] Question: Does the paper describe potential risks incurred by study participants, whether such risks were disclosed to the subjects, and whether Institutional Review Board (IRB) approvals (or an equivalent approval/review based on the requirements of your country or institution) were obtained?
    \item[] Answer: \answerNA{} 
    \item[] Justification: This paper does not include crowdsourcing. 
    \item[] Guidelines:
    \begin{itemize}
        \item The answer NA means that the paper does not involve crowdsourcing nor research with human subjects.
        \item Depending on the country in which research is conducted, IRB approval (or equivalent) may be required for any human subjects research. If you obtained IRB approval, you should clearly state this in the paper. 
        \item We recognize that the procedures for this may vary significantly between institutions and locations, and we expect authors to adhere to the NeurIPS Code of Ethics and the guidelines for their institution. 
        \item For initial submissions, do not include any information that would break anonymity (if applicable), such as the institution conducting the review.
    \end{itemize}

\item {\bf Declaration of LLM usage}
    \item[] Question: Does the paper describe the usage of LLMs if it is an important, original, or non-standard component of the core methods in this research? Note that if the LLM is used only for writing, editing, or formatting purposes and does not impact the core methodology, scientific rigorousness, or originality of the research, declaration is not required.
    \item[] Answer: \answerNA{} 
    \item[] Justification:  The paper does not involve large language models (LLMs) as part of the proposed method, experiments, or core contributions. 
    \item[] Guidelines:
    \begin{itemize}
        \item The answer NA means that the core method development in this research does not involve LLMs as any important, original, or non-standard components.
        \item Please refer to our LLM policy (\url{https://neurips.cc/Conferences/2025/LLM}) for what should or should not be described.
    \end{itemize}

\end{enumerate}


\newpage
\appendix

\section{Proof and additional theoretical analysis}
\label{app_sec:proof}

\subsection{\texorpdfstring{Proof of \cref{thm:main}}{Proof of Proposition 1}}
\label{app_subsec:thm_main}

\thmmain*
\begin{proof}
    The first equality of \cref{eq:thm_eq_2} holds because the order of the maximization problems in the LHS of \cref{eq:thm_eq_2} can be interchanged. The second inequality, from the RHS of \cref{eq:thm_eq_2} to the LHS of \cref{eq:thm_eq_4}, follows since both problems have the same objective function, but the LHS of \cref{eq:thm_eq_4} has a more restrictive feasible set. Finally, the last inequality from \cref{eq:thm_eq_4} holds because $(\rvc_{\mathrm{org}}, \cdots, \rvc_{\mathrm{org}})$ belongs to the feasible set of the LHS of \cref{eq:thm_eq_4}, which ensures that the optimal value of the LHS of \cref{eq:thm_eq_4} is equal to or greater than $h(\rvc_{\mathrm{org}}, \cdots, \rvc_{\mathrm{org}})$.
\end{proof}

\subsection{\texorpdfstring{Proof of \cref{thm:xt}}{Proof of Theorem 2}}
\label{app_subsec:thm_xt}

\thmxt*
\begin{proof}
    By applying a first-order Taylor expansion, we obtain the following:
    \begin{align}
        \log  p_{\bm{\theta}}(\rvx_t | \hat{\rvc}_t) & = \log  p_{\bm{\theta}} \Big (\rvx_t \Big | \rvc_{\mathrm{org}} +  \rho \frac{\nabla_{\bm{c}} h_t(\rvx_t,\rvc_{\mathrm{org}})}{|| \nabla_{\bm{c}} h_t(\rvx_t,\rvc_{\mathrm{org}}) ||_2} \Big ) \\
        & = \log p_{\bm{\theta}} (\rvx_t | \rvc_{\mathrm{org}}) + \rho \frac{\nabla_{\bm{c}} h_t(\rvx_t,\rvc_{\mathrm{org}})^T}{|| \nabla_{\bm{c}} h_t(\rvx_t,\rvc_{\mathrm{org}}) ||_2} \nabla_\rvc \log p_{\bm{\theta}}(\rvx_t|\rvc_{\mathrm{org}}) + O(\rho^2).
    \end{align}
    Taking the gradient with respect to $\rvx_t$ on both side then confirms the statement:
    \begin{align}
        \nabla_{\rvx_t} \log & p_{\bm{\theta}}(\rvx_t | \hat{\rvc}_t) \\
        & = \nabla_{\rvx_t} \log  p_{\bm{\theta}} \Big (\rvx_t \Big | \rvc_{\mathrm{org}} +  \rho \frac{\nabla_{\bm{c}} h_t(\rvx_t,\rvc_{\mathrm{org}})}{|| \nabla_{\bm{c}} h_t(\rvx_t,\rvc_{\mathrm{org}}) ||_2} \Big ) \\
        & = \nabla_{\rvx_t} \log p_{\bm{\theta}} (\rvx_t | \rvc_{\mathrm{org}}) +  \nabla_{\rvx_t} \Big \{  \rho \frac{\nabla_{\bm{c}} h_t(\rvx_t,\rvc_{\mathrm{org}})^T}{|| \nabla_{\bm{c}} h_t(\rvx_t,\rvc_{\mathrm{org}}) ||_2} \nabla_\rvc \log p_{\bm{\theta}}(\rvx_t|\rvc_{\mathrm{org}}) \Big \} + O(\rho^2).
    \end{align}
\end{proof}

\subsection{\texorpdfstring{Detailed derivation of \cref{eq:3update2_2}}{Detailed derivation of Eq. (13)}}
\label{app_subsec:der_eq13}

By applying a first-order Taylor approximation of $h_t$ around $\epsilon_t=0$ in \cref{eq:app_der_eq13_1}, we derive the expansion shown in \cref{eq:app_der_eq13_2}. Since the first term in \cref{eq:app_der_eq13_2} is independent of $\epsilon_t$, it can be omitted from the optimization objective, as shown in \cref{eq:app_der_eq13_3}.

\begin{align}
     \bm{\epsilon}_t^* \coloneqq & \argmax_{|| \bm{\epsilon}_t ||_2 \leq \rho} h_t(\rvx_t, \rvc_{\mathrm{org}} + \bm{\epsilon}_t;y,\bm{\theta}) \label{eq:app_der_eq13_1} \\
     \approx & \argmax_{|| \bm{\epsilon}_t ||_2 \leq \rho}  \Big \{ h_t(\rvx_t, \rvc_{\mathrm{org}};y,\bm{\theta}) + \bm{\epsilon}_t^{\text{T}} \nabla_{\bm{c}} h_t(\rvx_t,\rvc_{\mathrm{org}};y,\bm{\theta}) \Big \}  \label{eq:app_der_eq13_2} \\
     = & \argmax_{|| \bm{\epsilon}_t ||_2 \leq \rho} \bm{\epsilon}_t^{\text{T}} \nabla_{\bm{c}} h_t(\rvx_t,\rvc_{\mathrm{org}};y,\bm{\theta}) =: \bm{\hat{\epsilon}}_t. \label{eq:app_der_eq13_3} 
\end{align}

\subsection{Theoretical analysis of approximation errors}
\label{app_subsec:approx}

When the text-conditioned evaluation function $h$ is the CLIP score, we analyze the approximation error for \cref{eq:obj_taylor2} analogously to the theoretical analysis presented in \cite{chung2024prompt}.

\begin{restatable}{proposition}{thmapproxa}
\label{thm:approx1}
    Let $h(\rvx_0;y)=g(\rvf_I(\rvx_0); \rvf_T(y))$ is the CLIP score, where $\rvf_I$ and $\rvf_T$ are CLIP image and text encoder, respectively, and $g$ is the cosine similarity. Assume that there exists a constant $K>0$ such that $||\rvf_I (\rvx_0)||\geq K$ for all $\rvx_0 \in \mathcal{X}_0$, where $\mathcal{X}_0$ is the support of $p_{\bm{\theta}} (\rvx_0|\rvx_t, \rvc_t)$. Then, the approximation error of Eq. (10) is upper bounded by:
    \begin{align}
        | \mathbb{E}_{\rvx_0 \sim p_{\bm{\theta}} (\rvx_0|\rvx_t, \rvc_t)} [h(\rvx_0;y)] - h(\bar{\rvx}_0;y) | \leq \frac{1}{K} \cdot \max_{\rvx_0 \in \mathcal{X}_0} || \nabla_{\rvx} \rvf_I (\rvx) || \cdot m_1, \label{eq:approx}
    \end{align}
    where $m_1:= \int || \rvx_0 - \bar{\rvx}_0 || p(\rvx_0 | \rvx_t, \rvc_t) d\rvx_0$ is the mean deviation of the conditional distribution of $\rvx_0$.
\end{restatable}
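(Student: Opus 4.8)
The plan is to reduce the claim to a Lipschitz estimate for the CLIP score in its image argument, and then convert that estimate into the stated error bound by a triangle/Jensen-type argument. Specifically, once I show that, with $y$ held fixed, the map $\rvx_0 \mapsto h(\rvx_0;y)$ is Lipschitz on the relevant region with constant $L := \frac{1}{K}\max_{\rvx_0\in\mathcal{X}_0}\|\nabla_\rvx \rvf_I(\rvx_0)\|$, the bound follows immediately: I would write $|\mathbb{E}_{\rvx_0}[h(\rvx_0;y)] - h(\bar{\rvx}_0;y)| = |\mathbb{E}_{\rvx_0}[h(\rvx_0;y) - h(\bar{\rvx}_0;y)]| \le \mathbb{E}_{\rvx_0}|h(\rvx_0;y)-h(\bar{\rvx}_0;y)| \le L\,\mathbb{E}_{\rvx_0}\|\rvx_0-\bar{\rvx}_0\| = L\,m_1$, which is exactly \cref{eq:approx}.

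The key step is to control the Lipschitz constant of the composition $h = g\circ\rvf_I$. I would first bound the gradient of the cosine similarity $g(\rvu;\rvv)=\frac{\rvu^\top\rvv}{\|\rvu\|\,\|\rvv\|}$ with respect to its first argument $\rvu=\rvf_I(\rvx_0)$. A direct computation gives $\nabla_\rvu g(\rvu;\rvv) = \frac{1}{\|\rvu\|\,\|\rvv\|}\bigl(\rvv - \tfrac{\rvu^\top\rvv}{\|\rvu\|^2}\rvu\bigr)$; the vector in parentheses is the component of $\rvv$ orthogonal to $\rvu$, so its norm is at most $\|\rvv\|$, which yields the clean bound $\|\nabla_\rvu g(\rvu;\rvv)\|\le \tfrac{1}{\|\rvu\|}$. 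Applying the chain rule, $\nabla_\rvx h(\rvx;y) = (\nabla_\rvx \rvf_I(\rvx))^\top \nabla_\rvu g(\rvf_I(\rvx);\rvf_T(y))$, and by submultiplicativity of the operator norm $\|\nabla_\rvx h(\rvx;y)\| \le \|\nabla_\rvx \rvf_I(\rvx)\|\cdot \tfrac{1}{\|\rvf_I(\rvx)\|}$. Invoking the hypothesis $\|\rvf_I(\rvx)\|\ge K$ together with the maximal Jacobian norm over $\mathcal{X}_0$ then gives the uniform gradient bound $\|\nabla_\rvx h\|\le L$.

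To upgrade the pointwise gradient bound to the Lipschitz estimate $|h(\rvx_0;y)-h(\bar{\rvx}_0;y)|\le L\|\rvx_0-\bar{\rvx}_0\|$, I would integrate $\nabla_\rvx h$ along the line segment joining $\bar{\rvx}_0$ and $\rvx_0$ via the fundamental theorem of calculus, and then substitute into the triangle/Jensen estimate of the first paragraph.

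The main obstacle is precisely this integration-along-the-segment step: the hypotheses control $\|\rvf_I\|$ from below and $\|\nabla_\rvx\rvf_I\|$ from above only on the support $\mathcal{X}_0$, whereas the segment from $\rvx_0$ to $\bar{\rvx}_0=\mathbb{E}[\rvx_0]$ may leave $\mathcal{X}_0$, since the mean lies in the convex hull rather than the support itself. I would resolve this by taking the bounds to hold on the convex hull of $\mathcal{X}_0$ (equivalently, assuming $\mathcal{X}_0$ is convex, or reading the maximum and the lower bound $K$ as being over the convex hull), so that $\|\nabla_\rvx h\|\le L$ holds along the entire segment and the constant $L\,m_1$ is obtained exactly. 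This mirrors the analysis of \cite{chung2024prompt}.
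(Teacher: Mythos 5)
Your proposal is correct and follows essentially the same route as the paper's proof: the same Jensen/triangle step to move the absolute value inside the expectation, the same gradient computation giving $\|\nabla_{\rvu}\, g(\rvu;\rvv)\| \le 1/\|\rvu\| \le 1/K$ for the cosine similarity, and the same Jacobian factor $\max_{\rvx_0\in\mathcal{X}_0}\|\nabla_{\rvx}\rvf_I(\rvx)\|$; the only (cosmetic) difference is that the paper packages this as two separate Lipschitz estimates --- a lemma for $g$ applied in CLIP feature space between $\rvf_I(\rvx_0)$ and $\rvf_I(\bar{\rvx}_0)$, followed by the Lipschitz bound for $\rvf_I$ in data space --- rather than your single chain-rule bound on the composition $h = g\circ\rvf_I$. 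The convex-hull subtlety you flag (that $\bar{\rvx}_0$ need not lie in $\mathcal{X}_0$ and that mean-value segments may exit the region where the hypotheses hold) is genuine, but it is equally present and left unaddressed in the paper's proof, whose two mean-value steps likewise run along segments in feature space and data space; your proposed fix of reading the assumptions over the convex hull is the natural repair for both versions.
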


\begin{proof}
    First, we prove the following lemma for the property of the cosine similarity.
    
    \begin{lemma}
        \label{thm:lemma}
        Assume that there exists a constant $K>0$ such that $||\rvx|| \geq K$ for all $\rvx$. Then,
        \begin{align}
            | g(\rvx;\mathbf{y}) - g(\rvx';\mathbf{y})| \leq \frac{1}{K} || \rvx - \rvx' ||. 
        \end{align}
    \end{lemma}
    
    \begin{proof}[Proof of Lemma]
        The gradient of $g$ with respect to $\rvx$ is:
        \begin{align}
            \nabla_{\rvx} g(\rvx;\rvy) = \frac{1}{||\rvx|| ||\rvy||} \Big (\rvy - \frac{(\rvy^T \rvx)}{||\rvx||^2} \rvx \Big ).
        \end{align}
        Therefore, the norm of gradient can be derived as follows:
        \begin{align}
            ||\nabla_{\rvx} g(\rvx;\rvy)|| = \frac{1}{||\rvx|| ||\rvy||} \Big | \Big | \rvy - \frac{(\rvy^T \rvx)}{||\rvx||^2} \rvx \Big | \Big | \leq  \frac{1}{||\rvx|| ||\rvy||} ||\rvy|| = \frac{1}{|| \rvx ||} \leq \frac{1}{K}.
        \end{align}
        Therefore,
        \begin{align}
            | g(\rvx;\mathbf{y}) - g(\rvx';\mathbf{y})| \leq ( \max_{\rvx} || \nabla_{\rvx} g(\rvx;\rvy) || ) \cdot || \rvx - \rvx' || = \frac{1}{K}  || \rvx - \rvx' || 
        \end{align}
        Note that the first inequality comes from the mean value inequality.
        
        \end{proof}
    
    From \cref{thm:lemma}, we can derive the approximation error as follows.
    \begin{align}
        | \E_{\rvx_0 \sim p_{\bm{\theta}} (\rvx_0|\rvx_t, \mathbf{c}_t)} [h(\rvx_0;y)] - h(\bar{\rvx}_0;y) |
        & \leq \int | h(\rvx_0;y) - h(\bar{\rvx}_0;y)| p_{\bm{\theta}} (\rvx_0|\rvx_t, \mathbf{c}_t) d\rvx_0  \\
        & = \int | g(\rvf_I (\rvx_0); \rvf_T (y)) - g(\rvf_I (\bar{\rvx}_0) ;\rvf_T(y))| p_{\bm{\theta}} (\rvx_0|\rvx_t, \mathbf{c}_t) d\rvx_0 \\
        & \leq \frac{1}{K} \int || \rvf_I (\rvx_0) - \rvf_I (\bar{\rvx}_0) || p_{\bm{\theta}} (\rvx_0|\rvx_t, \mathbf{c}_t) d\rvx_0  \\
        & \leq \frac{1}{K} \cdot \max_{\rvx_0 \in \mathcal{X}_0} || \nabla_{\rvx} \rvf_I (\rvx) || \cdot \int || \rvx_0 - \bar{\rvx}_0 || p_{\bm{\theta}} (\rvx_0|\rvx_t, \mathbf{c}_t) d\rvx_0 \\
        & = \frac{1}{K} \cdot \max_{\rvx_0 \in \mathcal{X}_0} || \nabla_{\rvx} \rvf_I (\rvx) || \cdot m_1
    \end{align}
\end{proof}

In the upper bound of the approximation error in \cref{eq:approx}, $K$ is the minimum norm of CLIP image encoder outputs, which is about 25 in our experiments. Also, $\max||\nabla_\rvx \rvf_I(\rvx)||$ reflects the sharpness of CLIP image encoder; since the encoder is composed of neural networks, this value is finite, and smoother image encoders result in a lower approximation error. $m_1$ decreases as $t$ becomes smaller. Therefore, as $t$ approaches 0, the approximation error is reduced.

Additionally, by Taylor's theorem, the approximation error for text embedding updates in \cref{eq:3update2_2} is
\begin{align}
    R_1(\bm{\epsilon}_t):=\frac{1}{2}\bm{\epsilon}_t^T H_{h_t}(\tilde{\rvc})\bm{\epsilon}_t = O(\rho^2),
\end{align}
where $H_{h_t}(\tilde{\rvc})$ is the Hessian of $h_t$ evaluated at some $\tilde{\rvc}$ between $\rvc_{\mathrm{org}}$ and $\rvc_{\mathrm{org}} + \bm{\epsilon}_t$. Since $|| \bm{\epsilon}_t || \leq \rho$, this error is $O(\rho^2)$. Therefore, tuning $\rho$ trades off approximation accuracy against optimization flexibility. We empirically analyze this trade-off in \cref{fig:rho}.

\subsection{Theoretical analysis with convex text-conditioned evaluation function}
\label{app_subsec:convex}

In \cref{subsec:3objective} of the main manuscript, we reformulate the optimization problem in \cref{eq:ult_obj} into \cref{eq:prop_obj} using the linear approximation in \cref{eq:obj_taylor2}. At this point, when the text-conditioned evaluation function $h$ is convex, the objective functions of the two optimization problems satisfy the following inequality.
\begin{restatable}{proposition}{thmaaa}
\label{thm:convex}
    If $h$ is convex with respect to $\rvx_0$, then,
    \begin{align}
        h_t(\rvx_t,\rvc_t;y,\theta) \coloneqq  h(\bar{\rvx}_0(\rvx_t,\rvc_t;\theta);y) \leq \mathbb{E}_{\rvx_0 \sim p_{\bm{\theta}}(\rvx_0|\rvx_t,\rvc_t)} [h(\rvx_0;y)], 
    \label{eq:jensen}
    \end{align}
    where $\bar{\rvx}_0(\rvx_t,\rvc_t;\bm{\theta})\coloneqq\mathbb{E}_{\rvx_0 \sim p_{\bm{\theta}}(\rvx_0|\rvx_t,\rvc_t)}[\rvx_0]$.
\end{restatable}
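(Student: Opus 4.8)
The plan is to recognize that this inequality is nothing more than Jensen's inequality applied to the convex function $h(\cdot\,;y)$ against the conditional law $p_{\bm{\theta}}(\rvx_0|\rvx_t,\rvc_t)$. The key observation is that the mean predicted image is by definition the barycenter of this distribution, $\bar{\rvx}_0(\rvx_t,\rvc_t;\bm{\theta})=\mathbb{E}_{\rvx_0 \sim p_{\bm{\theta}}(\rvx_0|\rvx_t,\rvc_t)}[\rvx_0]$, so the left-hand side is $h$ evaluated at the mean, while the right-hand side is the mean of $h$. Convexity of $h$ in its first argument is exactly the hypothesis needed to order these two quantities.

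Concretely, I would first fix $\rvx_t$, $\rvc_t$, and $y$, and treat $\phi(\rvx_0):=h(\rvx_0;y)$ as a convex function of the single variable $\rvx_0 \in \mathbb{R}^d$. I would then invoke the multivariate Jensen inequality for the probability measure $p_{\bm{\theta}}(\cdot\,|\rvx_t,\rvc_t)$: for any convex $\phi$ and any random vector $\rvx_0$ with finite mean, $\phi\big(\mathbb{E}[\rvx_0]\big) \leq \mathbb{E}\big[\phi(\rvx_0)\big]$. Substituting $\phi=h(\cdot\,;y)$ and using the definition of $\bar{\rvx}_0$ gives $h(\bar{\rvx}_0;y) \leq \mathbb{E}_{\rvx_0 \sim p_{\bm{\theta}}(\rvx_0|\rvx_t,\rvc_t)}[h(\rvx_0;y)]$, which is precisely the claimed bound once the left-hand side is identified with $h_t(\rvx_t,\rvc_t;y,\bm{\theta})$ via its definition in \cref{eq:prop_obj}.

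There is essentially no hard step here; the result is a one-line consequence of convexity. The only point worth flagging for rigor is the standard regularity requirement behind Jensen's inequality, namely that $\rvx_0$ is integrable (so that $\bar{\rvx}_0$ is well-defined) and that $\mathbb{E}[h(\rvx_0;y)]$ exists in the extended reals; under these mild conditions the inequality holds even when the expectation is $+\infty$. If one wants to avoid appealing to Jensen as a black box, an equivalent route is to use a supporting hyperplane: at the point $\bar{\rvx}_0$ there exists a subgradient $\vg \in \partial h(\bar{\rvx}_0;y)$ with $h(\rvx_0;y) \geq h(\bar{\rvx}_0;y) + \vg^{\mathrm{T}}(\rvx_0 - \bar{\rvx}_0)$ for all $\rvx_0$; taking the expectation of both sides and noting that $\mathbb{E}[\rvx_0 - \bar{\rvx}_0]=\bm{0}$ makes the linear term vanish, yielding the same conclusion. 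Either argument suffices, and the supporting-hyperplane version has the pedagogical advantage of mirroring the first-order cancellation already used in \cref{eq:obj_taylor2}.
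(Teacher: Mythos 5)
Your proof is correct and follows exactly the same route as the paper, which proves the proposition in one line by invoking Jensen's inequality for the convex function $h$. Your additional remarks on integrability and the supporting-hyperplane derivation are sound elaborations of that same argument rather than a different approach.
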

\begin{proof}
    This follows directly from Jensen's inequality for the convex function $h$.
\end{proof}

Consequently, optimizing the text embedding $\rvc_t$ to maximize the proposed objective in \cref{eq:prop_obj} is expected to increase the target value $\mathbb{E}_{\rvx_0 \sim p_{\bm{\theta}}(\rvx_0|\rvx_t,\rvc_t)} [h(\rvx_0;y)]$ as well. However, \cref{thm:convex} relies on the assumption of the convexity of $h$ with respect to $\rvx_0$. This assumption may not always hold in practice, since $h$ often contains a pre-trained neural network that introduces non-convexity.

\section{Related works}
\label{app_sec:related}

\subsection{Diffusion models}
\label{app_subsec:diffusion}
We additionally provide an explanation of the stochastic differential equation (SDE) formulation of diffusion models. In continuous time space, the diffusion process can be generalized by formulating the forward and reverse processes as SDEs~\citep{song2021scorebased}. The forward process is formulated as:
\begin{align}
    \mathrm{d}\rvx_t = \mathbf{f}(\rvx_t,t)\mathrm{d}t+ g(t)\mathrm{d}\rvw_t, 
    \label{eq:background_fwd}
\end{align}
where $\mathbf{f}$ and $g$ are the drift and volatility functions, respectively. Here, $\rvw_t$ denotes the standard Wiener process and $t\in [0,T]$. Based on \cref{eq:background_fwd}, a data instance $\rvx_0$ is gradually perturbed towards $\rvx_T$.

Once $\mathbf{f}$ and $g$ of the forward process are specified, the reverse process is uniquely defined as shown in \cref{eq:background_rvs}, following the previous work~\citep{anderson1982reverse}:
\begin{align}
    \mathrm{d}\rvx_t = [\mathbf{f}(\rvx_t,t) - g^2(t)\nabla_{\rvx_{t}}\log q_t(\rvx_t) ]\mathrm{d}\bar{t}+ g(t)\mathrm{d}\bar{\rvw}_t ,
    \label{eq:background_rvs}
\end{align}
where $q_t(\rvx_t)$ is the marginal probability distribution of $\rvx_t$ at timestep $t$, and $\bar{\rvw}_t$ denotes the reverse-time Wiener process.

To generate samples, the reverse process requires an intermediate score function $\nabla_{\rvx_{t}} \log q_t(\rvx_t)$, which is generally intractable. Instead, a neural network $\rvs_\theta(\rvx_t,t)$ is used to approximate the score function via score matching loss~\citep{song2021scorebased}. Note that the score matching objective serves as an upper bound on the negative log-likelihood under certain temporal weighting functions~\citep{song2021maximum}.
This score matching objective is equivalent to the noise prediction~\citep{ho2020denoising,dhariwal2021diffusion} or the data reconstruction~\citep{kingma2021variational,karras2022elucidating} objectives.

\begin{table}[tp]
    \centering
    \caption{Comparison of the diffusion guidance methods.}
    \adjustbox{max width=\linewidth}{%
    \begin{tabular}{lll}
        \toprule
        Method & Guidance target & Guidance module   \\
        \midrule
        Classifier-free guidance~\cite{ho2021classifierfree} & Perturbed data $\rvx_t$ & Unconditional score network $\rvs_{\bm{\theta}}(\rvx_t,\varnothing,t)$\\
        Classifier guidance~\cite{dhariwal2021diffusion} & Perturbed data $\rvx_t$ & Time-dependent classifier $h_t(\rvx_t, \rvc)$ \\
        Universal guidance~\cite{bansal2023universal} & Perturbed data $\rvx_t$ & Time-independent classifier $h(\bar{\rvx}_0(\rvx_t, \rvc))$ \\
        \midrule
        DATE (ours) & Text embedding $\rvc_t$   & Time-independent classifier $h(\bar{\rvx}_0(\rvx_t, \rvc_t))$ \\
        \bottomrule
    \end{tabular}
    }
    \label{tab:compare_guidance}
\end{table}

\subsection{Guidance methods for diffusion models}
\label{app_subsec:guidance}

Conditional diffusion models have been developed to generate samples that align with a given condition $y$. These models approximate the conditional score $\nabla_{\rvx_t} \log q_{t} (\rvx_t|y)$ by incorporating the condition as an additional input to the score network~\citep{dhariwal2021diffusion,karras2022elucidating}. In this paper, we represent this input as the condition embedding $\rvc$, which encodes the information of $y$.

To further improve conditional generation, diffusion models often incorporate a \textit{guidance} term into the unconditional score function, as summarized in \cref{tab:compare_guidance}. Classifier guidance (CG)~\cite{dhariwal2021diffusion} introduces the gradient of a time-dependent classifier to encode conditional information.
\begin{align}       \label{eq:cg_app}
    \rvs_{\text{CG}} (\rvx_t, \rvc, t) \coloneqq \rvs_{\boldsymbol{\theta}} (\rvx_t, \varnothing, t) + w \nabla_{\rvx_t} \log h_t (\rvx_t, \rvc).
\end{align}
In contrast, classifier-free guidance (CFG)~\cite{ho2021classifierfree} eliminates the need for an explicit classifier by leveraging the difference between conditional and unconditional score estimates.
\begin{align}       \label{eq:cfg_app}
    \rvs_{\text{CFG}} (\rvx_t, \rvc, t) \coloneqq \rvs_{\boldsymbol{\theta}} (\rvx_t, \varnothing, t) + w \big ( \rvs_{\boldsymbol{\theta}} (\rvx_t, \rvc, t) -  \rvs_{\boldsymbol{\theta}} (\rvx_t, \varnothing, t) \big ).
\end{align}
Universal guidance (UG)~\cite{bansal2023universal} approximate the time-dependent guidance using a time-independent classifier, thereby avoiding time-dependent training.
\begin{align}       \label{eq:ug_app}
    \rvs_{\text{UG}} (\rvx_t, \rvc, t) \coloneqq \rvs_{\boldsymbol{\theta}} (\rvx_t, \varnothing, t) + w \nabla_{\rvx_t} \log h (\bar{\rvx}_0(\rvx_t, \rvc)).
\end{align}
Similar to UG, our method also uses a time-independent classifier to derive guidance. However, instead of applying this guidance to the perturbed data, we use it to directly adjust the text embeddings, thus modifying the conditioning information as its source. The theoretical implications of this text embedding guidance are further analyzed in \cref{thm:xt} from \cref{subsec:theory}.

\subsection{Improving sampling process in fixed diffusion models}
\label{app_subsec:improve_sampling}

Recent studies have explored refining the sampling process in fixed diffusion models~\cite{kim2023refining,xu2023restart,na2024diffusion}.
DG~\cite{kim2023refining} adjusts the score network by incorporating an auxiliary term from a discriminator that differentiates real and generated samples, reducing network estimation errors during sampling process.
Restart~\cite{xu2023restart} alternates between reverse and forward steps at fixed time intervals. It first denoises samples with a deterministic sampler up to a predefined timestep, then injects noise to introduce stochasticity, and repeats this process to mitigate accumulated errors.
DiffRS~\cite{na2024diffusion} evaluates sample quality at intermediate sampling steps using a discriminator, applies a rejection sampling scheme, and refines rejected samples by injecting instance-dependent stochastic noise.

Several studies have explored improving the sampling process in fixed diffusion models, specifically tailored for text-to-image diffusion models~\cite{chefer2023attend,rassin2023linguistic,meral2024conform}. AnE~\cite{chefer2023attend} improves subject representation by updating the intermediate perturbed latent to maximize attention scores for subject tokens. SynGen~\cite{rassin2023linguistic} adjusts the intermediate perturbed latent to enforce linguistic binding between entities and their visual attributes by aligning the attention maps of paired tokens while differentiating the attention maps of unrelated word tokens. CONFORM~\cite{meral2024conform} similarly updates the intermediate perturbed latent using contrastive loss on attention maps.

Unlike these methods, our approach does not require additional training of auxiliary components (e.g., a discriminator), updates the text embedding, and does not require additional information about the structure of the text prompt (e.g., binding token pairs). Moreover, our method can be integrated with existing approaches, as demonstrated in \cref{subsec:application} with experiments using CONFORM.

\section{Additional experimental settings}
\label{app_sec:add_exp_setting}

\subsection{Experiments on COCO dataset}
\label{app_subsec:setting_coco}

This subsection provides the experimental settings for \cref{subsec:quantitative,subsec:qualitative}, where DATE is evaluated on the COCO validation set~\cite{lin2014microsoft}. We use Stable Diffusion v1.5, pre-trained on LAION-5B~\cite{schuhmann2022laion}, with a fixed CLIP ViT-L/14 text encoder~\cite{radford2021learning} at a $512\times512$ resolution. We implement DATE on the Stable Diffusion pipeline with the Restart codebase,\footnote{\url{https://github.com/Newbeeer/diffusion_restart_sampling}} built on Diffusers.\footnote{\url{https://github.com/huggingface/diffusers}} For EBCA~\cite{park2023energybased}, we use the official EBCA codebase,\footnote{\url{https://github.com/EnergyAttention/Energy-Based-CrossAttention}} which is also built on Diffusers, and its provided hyperparameters. We conducted most experiments on a single NVIDIA A100 GPU with CUDA 11.4, and some ablation studies were performed on a single Intel Gaudi v2 using SynapseAI 1.18.0. Our implementation is publicly available at \url{https://github.com/aailab-kaist/DATE}.

We use DDIM~\cite{song2021denoising} with 50 sampling steps as the default sampler using classifier-free guidance~\cite{ho2021classifierfree}, and experiments with the DDPM~\cite{ho2020denoising} sampler are provided in~\cref{app_subsec:exp_coco}. We set the guidance scale to 8 by default and provide results for various guidance scales in \cref{app_subsec:exp_coco}.
For DATE settings, unless otherwise specified, we set the text-conditioned evaluation function $h$ to CLIP score, computed using CLIP ViT-L/14 from the Hugging Face library. If we set $h$ to ImageReward, we compute ImageReward using the BLIP-based checkpoint from the official ImageReward codebase.\footnote{\url{https://github.com/THUDM/ImageReward}}
We set the scale hyperparameter $\rho$ to 0.5 and use the embedding from the previous update as the original text embedding $c_\mathrm{org}$. For the ablation studies in \cref{tab:abl}, the text embedding is updated every 10\% of the total sampling steps; and for the sensitivity analysis on $\rho$ in \cref{fig:rho}, the text embedding is updated at every step.

We perform experiments across multiple backbones and samplers. For each backbone, we adopt the default sampler and configuration provided by the \textit{diffusers} library.
For PixArt-$\alpha$~\cite{chung2024scaling} as the text encoder is used. Sampling follows the default setup, employing DPM-Solver~\cite{lu2022dpm} with 20 steps and a classifier-free guidance scale of 4.5.
SD3~\cite{esser2024scaling} incorporates CLIP-L/14, CLIP-bigG/14, and T5-XXL encoders, utilizing a flow-matching Euler sampler (28 steps) with a guidance scale of 7.0.
FLUX~\cite{flux2024} adopts a rectified flow transformer paired with CLIP-L/14 and T5-XXL text encoders, using a flow-matching Euler sampler (28 steps) and a guidance scale of 3.5.
SDXL~\cite{podell2024sdxl} relies on CLIP-L/14 and CLIP-bigG/14 as text encoders with a DDIM sampler, 50 steps, and a guidance scale of 5.0.

Implementations for PixArt-$\alpha$, SD3, FLUX, and SDXL are based respectively on the \textit{PixArtAlphaPipeline},\footnote{\url{https://huggingface.co/docs/diffusers/main/en/api/pipelines/pixart}} \textit{StableDiffusion3Pipeline},\footnote{\url{https://huggingface.co/docs/diffusers/main/en/api/pipelines/stable_diffusion/stable_diffusion_3}} \textit{FluxPipeline},\footnote{\url{https://huggingface.co/docs/diffusers/main/en/api/pipelines/flux}} and \textit{StableDiffusionXLPipeline},\footnote{\url{https://huggingface.co/docs/diffusers/main/en/api/pipelines/stable_diffusion/stable_diffusion_xl}} available in the Diffusers library.

Following the evaluation protocol of previous studies~\cite{nichol2022glide,xu2023restart,na2024diffusion}, we generate 5,000 images from randomly sampled captions in the COCO validation set. We fix the captions and random seed in all experiments.
We evaluate text-to-image generation performance using zero-shot FID, CLIP score, and ImageReward. Zero-shot FID (Fréchet Inception Distance)~\cite{heusel2017gans,ramesh2021zero} measures the distributional similarity between real and generated images with the same text prompt in a feature space. Lower zero-shot FID values indicate that the generated images are more realistic and closer to the real image distribution. CLIP score~\cite{hessel2021clipscore} quantifies semantic alignment between a generated image and its text prompt by computing the cosine similarity between their embeddings in CLIP space~\cite{radford2021learning}. A higher CLIP score suggests better text-image alignment. ImageReward~\cite{xu2023imagereward} is a learned reward model trained on human preference data. Using a BLIP-based vision-language model~\cite{li2022blip}, it evaluates text-image alignment and fidelity based on human judgment. A higher ImageReward score indicates that the generated image is more likely to be well aligned with human preferences.

\subsection{Multi-concept generation}
\label{app_subsec:setting_multiconcept}

\begin{table}[tp]
    \centering
    \caption{Prompt categories and examples on AnE dataset~\cite{chefer2023attend}.}
    \adjustbox{max width=\linewidth}{%
    \begin{tabular}{lllc}
        \toprule
        \textbf{Prompt Category} & \textbf{Template} & \textbf{Example} & \textbf{\# of prompts} \\
        \midrule
        Animal-Animal  & a [animalA] and a [animalB]        & \textit{a monkey and a frog}        & 66  \\
                   
        \midrule
        Animal-Object  & a [animal] and a [color][object]    & \textit{a monkey and a red car}     & 144 \\
                   
        \midrule
        Object-Object  & a [colorA][objectA] and a [colorB][objectB] & \textit{a pink crown and a purple bow} & 66 \\
                     
        \bottomrule
    \end{tabular}
    }
    \label{tab:ane_dataset}
\end{table}

We conduct our experiments on the Attend-and-Excite (AnE) dataset proposed by \cite{chefer2023attend}. There are three types of prompts: 1) Animal-Animal: “a [animalA] and a [animalB]”, 2) Animal–Object: “a [animal] and a [color][object]”, and 3) Object–Object: “a [colorA][objectA] and a [colorB][objectB]”. Detailed examples are provided in \cref{tab:ane_dataset}.

For baseline comparison, we evaluate our approach against base Stable Diffusion and CONFORM~\citep{meral2024conform}. We implement the CONFORM-based methods using its official codebase,\footnote{\url{https://github.com/gemlab-vt/CONFORM}} which is built on Diffusers. We use DDIM with 50 sampling steps using classifier-free guidance scale of 8. We set $h$ as CLIP score, the scale hyperparameter $\rho$ to 0.5, use the embedding from the previous update as the original text embedding $c_\mathrm{org}$, and text embedding is updated at every step.

We follow the quantitative evaluation protocol from previous studies~\citep{chefer2023attend,meral2024conform}. For each prompt, we generate images using 64 random seeds and evaluate performance based on text-image similarity and text-text similarity in CLIP space.
\textit{Full prompt similarity} measures the CLIP-based similarity between the entire prompt and the generated image. This metric measures the overall alignment, but it may not fully capture whether all concepts are represented. \textit{Minimum object similarity} is computed by splitting the prompt into two sub-prompts and taking the lower CLIP similarity score between them, ensuring that even the least-represented concept is considered. For these similarities, we employ the CLIP ViT-B/16 model. The text embedding for each prompt is obtained by averaging the CLIP embeddings of 80 predefined prompt templates (e.g., “a good photo of a \{\textit{prompt}\}.”, “a photo of a clean \{\textit{prompt}\}.”). These similarities are then computed as the average similarity between this text embedding and the CLIP embeddings of the 64 generated images.
For \textit{text-caption similarity}, we generate captions for the 64 generated images using a pre-trained BLIP image-captioning model~\citep{li2022blip}. Then, we compute the CLIP similarity between the prompt's text embedding (obtained as described above) and the embeddings of the generated captions. The resulting similarity score is averaged across all generated images. To compute these metrics, we use the official implementation of AnE.\footnote{\url{https://github.com/yuval-alaluf/Attend-and-Excite/tree/main/metrics}}

\subsection{Text-guided image editing}
\label{app_subsec:setting_edit}

We integrate DATE with DDPM inversion~\citep{huberman2024edit} on the ImageNet-R-TI2I dataset introduced in PnP~\citep{tumanyan2023plug}. DDPM inversion is a recent method that memorizes all latent vectors while tracing the inverse trajectory of a diffusion process. It generalizes DDIM inversion in the perspective of DDPM sampling framework. Our implementation is based on the official DDPM inversion codebase.\footnote{\url{https://github.com/inbarhub/DDPM_inversion}}

For evaluation, we follow the parameter setting of DDPM inversion and vary the classifier-free guidance scale. Specifically, we use Stable Diffusion v1.4 with 100 sampling steps. DDPM inversion is evaluated with a guidance scale of $\{9, 12, 15, 18, 21\}$, while DATE is tested with a guidance scale of $\{6, 9, 12, 15\}$.
We set $h$ as CLIP score, set $\rho$ to 0.5, initialize each step with the embedding from the previous one, and update the text embedding at every step. We report LPIPS~\citep{zhang2018unreasonable} for perceptual similarity with the source image and CLIP score for alignment with the target prompt. LPIPS quantifies perceptual similarity using feature representations from a pre-trained VGG network~\citep{simonyan15very}, and CLIP score evaluates the cosine similarity between the target prompt embedding and the modified image embedding from a pre-trained CLIP model. 

\begin{table}[tp]
    \centering
    \caption{
    Performance on the COCO validation set with Stable Diffusion v1.5 using the DDPM sampler with a classifier-free guidance scale of 8. Sampling steps are 50 unless otherwise specified. \textit{Time} is the average sampling time (min.) for 64 samples, and \textit{NFE} is the number of score network evaluations. \textbf{Bold} values indicate the best performance.
    }
    \adjustbox{max width=\linewidth}{%
    \begin{tabular}{l|cc|ccc}
        \toprule
        Method & Time & NFE & Zero-shot FID$\downarrow$ & CLIP score$\uparrow$ & ImageReward$\uparrow$    \\
        \midrule
        Fixed text embedding              & 5.76 & 100 & 21.94 & 0.3223 & 0.2567 \\
        Fixed text embedding (steps=70)   & 7.91 & 140 & 21.11 & 0.3212 & 0.2589 \\ 
        EBCA~\cite{park2023energybased}   & 8.13 & 100 & 30.95 & 0.2851 & -0.2843 \\ 
        \cmidrule{1-6}
        \multicolumn{5}{l}{\textbf{DATE (ours)}} \\
        \quad{\alignwithmakebox{10\% update}{with CLIP score}} & 7.90 & 105 &  {20.78} & {0.3239} & {0.2630} \\
        \quad{\alignwithmakebox{all updates}{with CLIP score}} & 24.21 & 150 & \textbf{20.68} & \textbf{0.3312} & {0.2712} \\
        \quad{\alignwithmakebox{10\% update}{with ImageReward}} & 7.90 & 105 & {21.14} & {0.3246} & {0.4913} \\
        \quad{\alignwithmakebox{all updates}{with ImageReward}} & 24.21 & 150 & {21.33} & {0.3240} & \textbf{1.3262} \\
        \bottomrule
    \end{tabular}
    }
    \label{tab:sd_ddpm}
\end{table}

\section{Additional experimental results}
\label{app_sec:add_exp_result}

\subsection{Additional experimental results on COCO dataset}
\label{app_subsec:exp_coco}

\begin{table*}[tp]
    \centering
    \caption{Performance on the COCO validation set with Stable Diffusion v1.5, varying classifier-free guidance (CFG) scales. Sampling steps are 50 unless otherwise specified. For DATE, we apply a 10\% update with CLIP score. \textbf{Bold} values indicate the best performance for each sampler and guidance scale.}
    \adjustbox{max width=\linewidth}{%
    \begin{tabular}{lcl|ccc}
        \toprule
        Sampler & CFG scale & Method & Zero-shot FID$\downarrow$ & CLIP score$\uparrow$ & ImageReward$\uparrow$    \\
        \midrule
        DDIM    & 2 & Fixed text embedding               & 15.90 & 0.2915 & -0.3616 \\
                &   & Fixed text embedding (steps=70)         & 16.49 & 0.2905 & -0.3664 \\
                &   & EBCA~\cite{park2023energybased}    & 28.41 & 0.2492 & -0.9913 \\ 
                &   & \textbf{DATE (ours)}               & \textbf{15.00} & \textbf{0.2959} & \textbf{-0.2838} \\
        \cmidrule{2-6}
                & 3 & Fixed text embedding               & 14.04 & 0.3065 & -0.0947 \\
                &   & Fixed text embedding (steps=70)              & 14.14 & 0.3056 & -0.0981 \\ 
                &   & EBCA~\cite{park2023energybased}    & 20.67 & 0.2710 & -0.6798 \\ 
                &   & \textbf{DATE (ours)}               & \textbf{13.70} & \textbf{0.3089} & \textbf{-0.0451} \\
        \cmidrule{2-6}
                & 5 & Fixed text embedding               & 16.14 & 0.3163 & 0.1165 \\
                &   & Fixed text embedding (steps=70)              & 15.70 & 0.3155 & 0.1072 \\ 
                &   & EBCA~\cite{park2023energybased}    & 20.98 & 0.2842 & -0.4133 \\ 
                &   & \textbf{DATE (ours)}               & \textbf{15.24} & \textbf{0.3182} & \textbf{0.1296} \\
        \cmidrule{2-6}
                & 8 & Fixed text embedding               & 18.66 & 0.3204 & 0.2132 \\
                &   & Fixed text embedding (steps=70)             & 18.27 & 0.3199 & 0.2137 \\ 
                &   & EBCA~\cite{park2023energybased}    & 25.85 & 0.2877 & -0.3128 \\ 
                &   & \textbf{DATE (ours)}               & \textbf{17.90} & \textbf{0.3237} & \textbf{0.2364} \\
        \midrule
        DDPM    & 2 & Fixed text embedding               & 14.07 & 0.3008 & -0.2173 \\
                &   & Fixed text embedding (steps=70)               & \textbf{13.58} & 0.2999 & -0.2125 \\ 
                &   & EBCA~\cite{park2023energybased}    & 22.97 & 0.2629 & -0.8076 \\ 
                &   & \textbf{DATE (ours)}               & 13.77 & \textbf{0.3033} & \textbf{-0.1864} \\
        \cmidrule{2-6}
                & 3 & Fixed text embedding               & 15.17 & 0.3129 & 0.0315 \\
                &   & Fixed text embedding (steps=70)               & \textbf{14.85} & 0.3120 & 0.0482 \\ 
                &   & EBCA~\cite{park2023energybased}    & 21.51 & 0.2779 & -0.5186 \\ 
                &   & \textbf{DATE (ours)}               & 15.04 & \textbf{0.3158} & \textbf{0.0717} \\
        \cmidrule{2-6}
                & 5 & Fixed text embedding               & 18.72 & 0.3199 & 0.1941 \\
                &   & Fixed text embedding (steps=70)               & \textbf{18.27} & 0.3190 & 0.1974 \\ 
                &   & EBCA~\cite{park2023energybased}    & 25.07 & 0.2862 & -0.3178 \\ 
                &   & \textbf{DATE (ours)}               & 18.32 & \textbf{0.3225} & \textbf{0.2051} \\
        \cmidrule{2-6}
                & 8 & Fixed text embedding               & 21.94 & 0.3223 & 0.2567 \\
                &   & Fixed text embedding (steps=70)               & 21.11 & 0.3212 & 0.2589 \\ 
                &   & EBCA~\cite{park2023energybased}    & 30.95 & 0.2851 & -0.2843 \\ 
                &   & \textbf{DATE (ours)}               & \textbf{20.78} & \textbf{0.3239} & \textbf{0.2630} \\
        \bottomrule
    \end{tabular}
    }
    \label{tab:sd_full}
\end{table*}

\paragraph{Other sampler and guidance scale}
\cref{tab:sd_ddpm} shows the experimental results of the baseline and DATE using the DDPM sampler, and \cref{tab:sd_full} and \cref{fig:cfg} show the results over different classifier-free guidance scales. Note that changing the classifier-free guidance scale does not affect the sampling time and NFE. We observe that DATE achieves performance improvements over the baseline in most metrics. These results demonstrate that our method consistently improves text-image alignment for generated images across various samplers and guidance scales.
\begin{table}[t]
\centering
\caption{Results on COCO with SD v1.5 (extension of \cref{tab:main} in the main text), including various combined evaluation functions. Columns 2--5 indicate which metrics are used in $h$. Higher and lower weights in columns 2–-5 are denoted by $\circledcirc$ and $\circ$;, respectively; a blank indicates the metric is not used. Bold numbers mark cases where combined metrics outperform the target single metric alone.}
\label{tab:coco_multi_extended}
\resizebox{\linewidth}{!}{
\begin{tabular}{l|CCCC|ccccc}
\toprule
& \multicolumn{4}{c|}{\textbf{Evaluation function used in $h$}} &\multicolumn{5}{c}{\textbf{Metrics}} \\
Method & AS & CS & IR & PS & FID$\downarrow$ & AS$\uparrow$ & CS$\uparrow$ & IR$\uparrow$ & PS$\uparrow$ \\
\midrule
Fixed (50 steps) &  &  &  &  & 18.66 & 5.38 & 0.3204 & 0.2132 & 21.51 \\
Fixed (70 steps) &  &  &  &  & 18.27 & 5.37 & 0.3199 & 0.2137 & 21.50 \\
\midrule
\multicolumn{10}{l}{\textbf{DATE (50 steps, 10\% update)}}\\
\multicolumn{10}{l}{\emph{with a single evaluation function}}\\
AS & $\circledcirc$ &  &  &  & {18.82} & {5.58} & {0.3169} & {0.1910} & {21.46} \\
CS &  & $\circledcirc$ &  &  & {17.90} & {5.35} & {0.3237} & 0.2364 & 21.53 \\
IR &  &  & $\circledcirc$ &  & 18.61 & 5.40 & 0.3224 & {0.4792} & 21.53 \\
PS &  &  &  & $\circledcirc$ & 18.49 & 5.42 & 0.3225 & 0.2745 & {21.93} \\
\midrule
\multicolumn{10}{l}{\emph{with two combined evaluation functions}}\\
AS+CS & $\circledcirc$ & $\circ$ &  &  & 18.77 & {5.58} & 0.3171 & 0.1911 & 21.46 \\
& $\circ$ & $\circledcirc$ &  &  & 18.15 & 5.38 & 0.3219 & 0.2179 & 21.43 \\
AS+IR & $\circledcirc$ &  & $\circ$ &  & 18.90 & 5.57 & 0.3176 & 0.2428 & 21.48 \\
& $\circ$ &  & $\circledcirc$ &  & 18.15 & 5.43 & 0.3216 & 0.4575 & 21.53 \\
AS+PS & $\circledcirc$ &  &  & $\circ$ & 18.81 & {5.58} & 0.3175 & 0.2091 & 21.54 \\
& $\circ$ &  &  & $\circledcirc$ & 18.67 & 5.44 & 0.3219 & 0.2705 & 21.91 \\
CS+IR &  & $\circledcirc$ & $\circ$ &  & 17.94 & 5.39 & \textbf{0.3241} & 0.4430 & 21.52 \\
&  & $\circ$ & $\circledcirc$ &  & 18.04 & 5.40 & 0.3225 & 0.4756 & 21.53 \\
CS+PS &  & $\circledcirc$ &  & $\circ$ & 18.33 & 5.40 & \textbf{0.3241} & 0.2839 & 21.87 \\
&  & $\circ$ &  & $\circledcirc$ & 18.61 & 5.42 & 0.3224 & 0.2753 & 21.93 \\
IR+PS &  &  & $\circledcirc$ & $\circ$ & 18.15 & 5.41 & 0.3226 & 0.4774 & 21.63 \\
&  &  & $\circ$ & $\circledcirc$ & 18.54 & 5.42 & 0.3227 & 0.3126 & 21.93 \\
\midrule
\multicolumn{10}{l}{\emph{with three combined evaluation functions}}\\
AS+CS+IR & $\circledcirc$ & $\circ$ & $\circ$ &  & 18.87 & 5.57 & 0.3179 & 0.2487 & 21.48 \\
& $\circ$ & $\circledcirc$ & $\circ$ &  & 18.50 & 5.46 & 0.3208 & 0.3332 & 21.50 \\
& $\circ$ & $\circ$ & $\circledcirc$ &  & 18.07 & 5.43 & 0.3219 & 0.4557 & 21.53 \\
AS+CS+PS & $\circledcirc$ & $\circ$ &  & $\circ$ & 18.96 & 5.57 & 0.3179 & 0.2092 & 21.55 \\
& $\circ$ & $\circledcirc$ &  & $\circ$ & 18.62 & 5.47 & 0.3211 & 0.2450 & 21.69 \\
& $\circ$ & $\circ$ &  & $\circledcirc$ & 18.69 & 5.44 & 0.3221 & 0.2705 & 21.91 \\
AS+IR+PS & $\circledcirc$ &  & $\circ$ & $\circ$ & 18.93 & 5.57 & 0.3186 & 0.2602 & 21.56 \\
& $\circ$ &  & $\circledcirc$ & $\circ$ & 18.33 & 5.44 & 0.3222 & 0.4599 & 21.61 \\
& $\circ$ &  & $\circ$ & $\circledcirc$ & 18.72 & 5.44 & 0.3224 & 0.3071 & 21.91 \\
CS+IR+PS &  & $\circledcirc$ & $\circ$ & $\circ$ & 18.32 & 5.41 & \textbf{0.3244} & 0.4192 & 21.81 \\
&  & $\circ$ & $\circledcirc$ & $\circ$ & 18.16 & 5.41 & 0.3229 & \textbf{0.4812} & 21.63 \\
&  & $\circ$ & $\circ$ & $\circledcirc$ & 18.68 & 5.42 & 0.3228 & 0.3138 & 21.92 \\
\bottomrule
\end{tabular}
}
\end{table}

\paragraph{Additional experiment results on multi-objective optimization}
\cref{tab:coco_multi_extended} reports results when using each metric, or their weighted combinations, is used as DATE’s evaluation function. 
In most cases, DATE improves performance over the fixed embedding, regardless of whether a given metric is used as the evaluation function. One exception is when AS is included, where performance on other metrics often decreases, likely because AS, being independent of the text input, offers little synergy with semantic alignment metrics, as shown in \cref{fig:corr_multi}. Consistent with \cref{subfig:clip_ir_perf} in the main text, the combined objective can yield higher values than using a single metric alone as indicated by the bold numbers in \cref{tab:coco_multi_extended}, demonstrating the synergistic potential of combining metrics during text embedding optimization.

\begin{figure}[t]
    \centering
    \begin{minipage}[t]{0.72\textwidth}
        \captionof{table}{Computation time per sampling step for a batch size of 4.}
        \adjustbox{max width=\linewidth}{%
        \begin{tabular}{lc}
            \toprule
            Operation & Time (sec.) \\
            \midrule
            Uncond. and cond. score network evaluation (base sampling) & 0.33 \\
            Text embedding update (including gradient computation) & 1.07 \\
            Updated score network evaluation & 0.28 \\
            \bottomrule
        \end{tabular}
        \label{tab:time}
        }
    \end{minipage}
    \begin{minipage}[t]{0.26\textwidth}
        \captionof{table}{GPU memory usage for a batch size of 4.}
        \adjustbox{max width=\linewidth}{%
        \begin{tabular}{lc}
            \toprule
            Method & Memory (GB) \\
            \midrule
            Fixed & 24.0 \\
            DATE & 61.5 \\
            \bottomrule
        \end{tabular}
        \label{tab:memory}
        }
    \end{minipage}
\end{figure}

\begin{figure}[tp]
    \centering
    \includegraphics[width=0.9\linewidth]{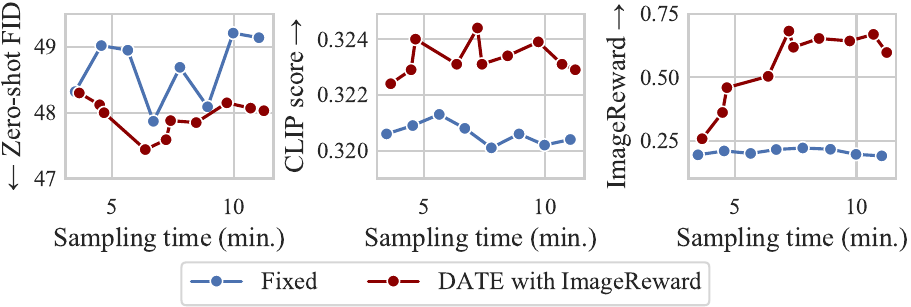}
    \caption{Performance comparison between fixed text embedding and DATE with ImageReward over different sampling times (minutes per 64 samples). FID values are computed using 1,000 samples, unlike the 5,000 samples used in the main text, which causes a scale discrepancy.}
    \label{fig:sampling_time}
\end{figure}

\begin{figure}[t]
    \centering
    \begin{minipage}[t]{0.4\textwidth}
        \vspace*{0pt}
        \centering
        \includegraphics[width=\linewidth]{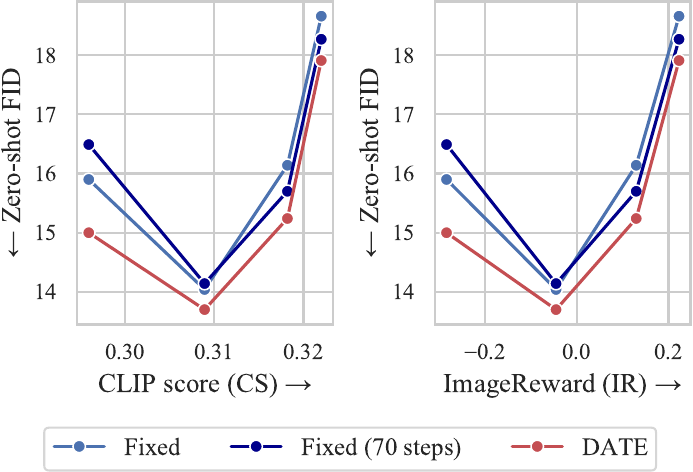}
        \caption{Performance between FID and text-image alignment metrics varying the classifier-free guidance scale with the DDIM sampler.}
        \label{fig:cfg}
    \end{minipage}
    \hfill
    \begin{minipage}[t]{0.55\textwidth}
        \vspace*{0pt}
        \centering
        \includegraphics[width=\linewidth]{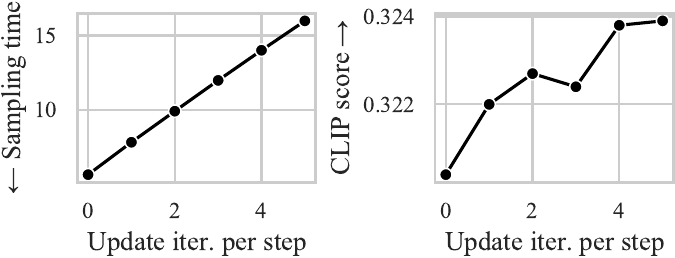}
        \caption{Sensitivity analysis on the number of update iterations per sampling step with a 50-step DDIM sampler and a classifier-free guidance scale of 8. For DATE, we apply a 10\% update with CLIP score. When the number of iterations is 0, it is identical to fixed embedding.}
        \label{fig:num_update}
    \end{minipage}
\end{figure}

\paragraph{Computational costs}
Updating embeddings at each timestep increases the computational costs. As shown in the leftmost graph in \cref{fig:update_percent} of the main text, sampling time increases with update proportion. The overhead stems from extra score network evaluations for $\rvx_0$, and gradient computations through $h$ and diffusion model, as mentioned in \cref{subsec:3practical}. A breakdown of the time required for each of these operations is provided in \cref{tab:time}, and GPU memory usage is reported in \cref{tab:memory}.

Despite the added computational cost, \cref{fig:sampling_time} shows that DATE consistently achieves better performance than fixed embeddings at comparable sampling times. We generate samples using text prompts of the COCO validation set with Stable Diffusion v1.5 using the DDIM sampler. Fixed embedding adjusts the number of sampling steps, and DATE adjusts both the number of sampling steps and embedding updates. DATE outperforms the fixed embedding on all evaluation metrics and sampling times. Notably, simply increasing the number of sampling steps in the fixed embedding setup yields only marginal improvements.

\paragraph{Statistical significance}
To assess the statistical significance of the CLIP score improvements introduced by using ImageReward as $h$, we conduct a paired t-test. Comparing samples generated with fixed embeddings (sampling time = 6.34 minutes) and DATE (sampling time = 6.03 minutes), we obtain a p-value of 0.00056, indicating a statistically significant improvement at comparable sampling costs.

\begin{figure}[tp]
    \centering
    \includegraphics[width=\linewidth]{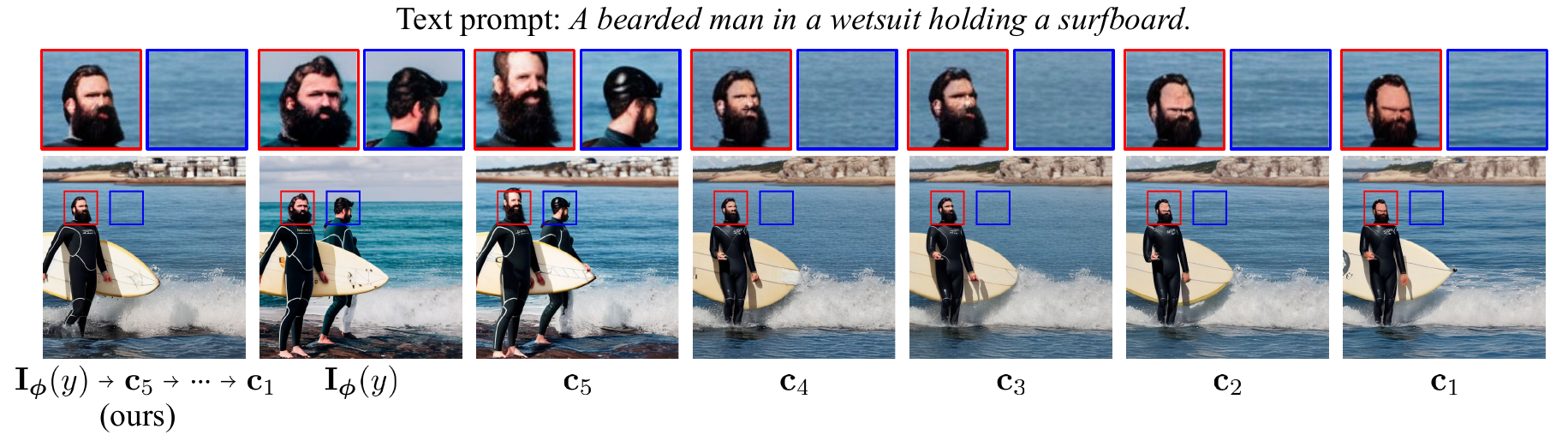}
    \caption{Generated images of DATE and various fixed text embeddings. The bottom image in each column is the generated image, while the two images above it are zoomed-in views of the boxed regions in the generated image. $\mathbf{I}_{\bm{\phi}}(y)$ represents the text embedding from the original text encoder, while $\rvc_5, \rvc_4, \dots, \rvc_1$ represent the text embeddings updated during the intermediate sampling steps of DATE, with larger indices indicating earlier stages of sampling. The leftmost image is generated using DATE with dynamic updates, while the remaining images are generated with fixed text embeddings.}
    \label{fig:3time_adaptive}
\end{figure}

\subsection{Additional analysis of DATE}
\label{app_subsec:exp_time}

\paragraph{Multiple text embedding updates}
We hypothesize that performing multiple text embedding updates per sampling step can expand the search space beyond the initial $\rho$-ball, potentially leading to improved performance. \cref{fig:num_update} indicates that the CLIP score generally increases with more update iterations. However, each additional iteration incurs extra forward and backward passes, resulting in a linear increase in sampling time.

\paragraph{Time-adaptive text embedding}
To analyze the updated text embeddings of DATE, we inject the each updated embedding into the entire sampling process. \cref{fig:3time_adaptive} shows the generated images from DATE and several fixed text embeddings. With the embedding obtained after the middle sampling step ($\rvc_4$), we observe that the information `\textit{two men}' is transformed into `\textit{a man}' in the text embedding. Furthermore, when using the updated text embeddings at later sampling steps ($\rvc_2,\rvc_1$), we observe that the face region of the generated image appears distorted. This suggests that the final updated text embedding is not necessarily globally optimal, and that an appropriate text embedding may exist at each diffusion timestep.

\begin{table}[tp]
    \centering
    \caption{Performance comparison for multi-concept generation on the AnE dataset, compared across Stable Diffusion, DATE with ImageReward, and DATE with CLIP score.}
    \adjustbox{max width=\linewidth}{%
    \begin{tabular}{ll|cccc}
        \toprule
        Prompt type & Method & Full prompt & Min. object & Text-caption & TIFA score\\
        \midrule
        \multirow{3}{*}{Animal-Animal} 
            & Stable Diffusion & 0.3123 & 0.2174 & 0.7677 & 0.6847 \\
            & \quad\textbf{+ DATE (ImageReward)}  & 0.3219 & 0.2371 & 0.7858 & 0.7948 \\
            & \quad\textbf{+ DATE (CLIP score)} & \bf{0.3282} & \bf{0.2398} & \bf{0.7888} & \bf{0.8159} \\
        \midrule
        \multirow{3}{*}{Animal-Object} 
            & Stable Diffusion & 0.3443 & 0.2480 & 0.7925 & 0.8223 \\
            & \quad\textbf{+ DATE (ImageReward)}  & 0.3454 & 0.2512 & \bf{0.8009} & \bf{0.8486} \\
            & \quad\textbf{+ DATE (CLIP score)} & \bf{0.3530} & \bf{0.2568} & \bf{0.8009} & 0.8420 \\
        \midrule
        \multirow{3}{*}{Object-Object} 
            & Stable Diffusion & 0.3377 & 0.2404 & 0.7684 & 0.6402 \\
            & \quad\textbf{+ DATE (ImageReward)}  & 0.3391 & 0.2454 & 0.7706 & \bf{0.6910} \\
            & \quad\textbf{+ DATE (CLIP score)} & \bf{0.3503} & \bf{0.2544} & \bf{0.7728} & 0.6643 \\
        \bottomrule
    \end{tabular}
    }
    \label{tab:ane}
\end{table}

\begin{figure}[tp]
    \centering
    \includegraphics[width=\linewidth]{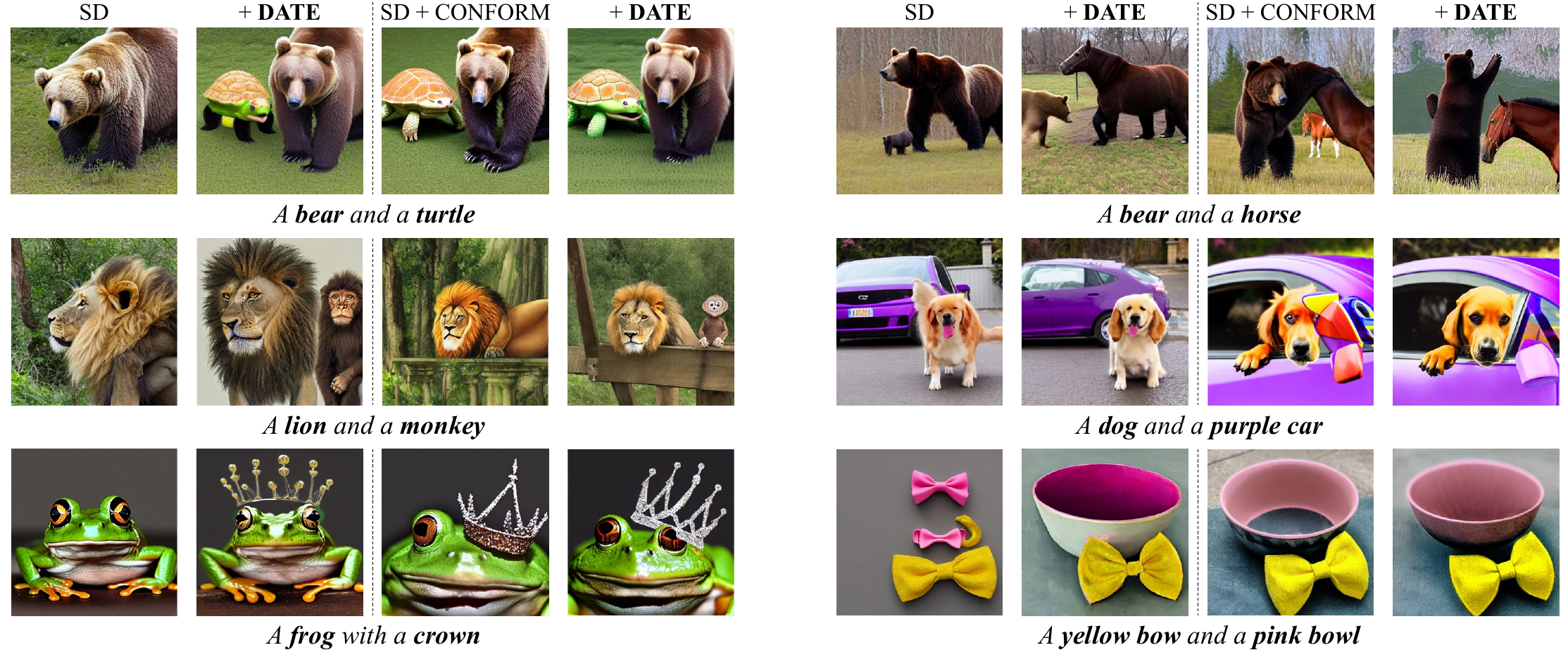}
    \caption{Additional generated images on the AnE dataset for multi-concept generation.}
    \label{fig:app_example_multi}
\end{figure}

\subsection{Additional results for applications}
\label{app_subsec:exp_application}

\textbf{Multi-concept generation}~
In addition to the main evaluation metrics, we also assess model performance using the TIFA score~\cite{hu2023tifa}, a recently proposed metric designed to measure the faithfulness of generated images to their textual prompts. TIFA leverages a Visual Question Answering (VQA) model to quantify alignment between image content and prompt, providing an evaluation that is independent of CLIP.

\cref{tab:ane} presents the full evaluation results across all prompt types in the multi-concept generation setting. We also include results for DATE with ImageReward as $h$. DATE consistently improves the TIFA score across all tested cases. Furthermore, when using ImageReward as $h$, DATE continues to outperform the baseline across all evaluation metrics, highlighting its robustness and effectiveness regardless of the chosen evaluation function.
In addition, \cref{fig:app_example_multi} shows more generated images for various prompts from the AnE dataset. These results demonstrate that DATE effectively applies to multi-concept generation methods, enabling the generation of images that accurately reflect the given concepts.

\begin{figure}[tp]
    \centering
    \includegraphics[width=\linewidth]{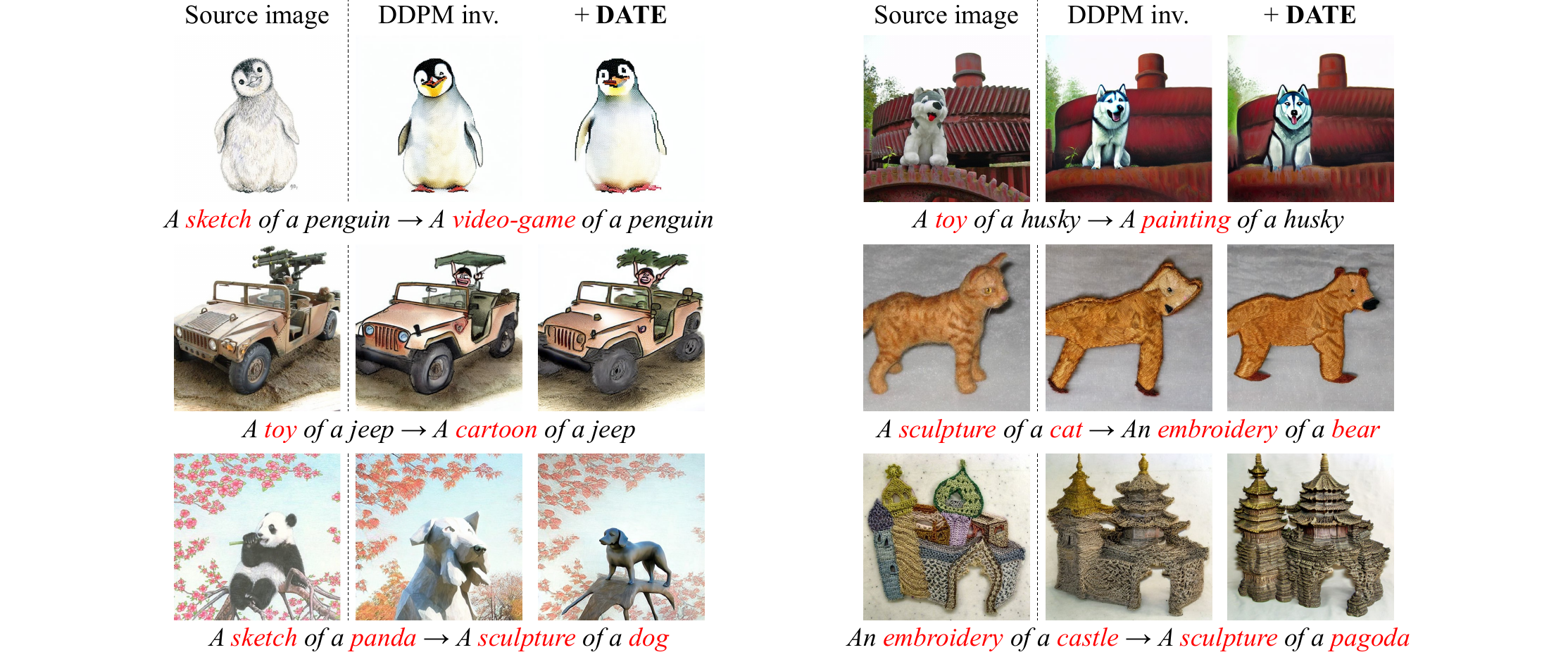}
    \caption{Additional examples of edited images on the ImageNet-R-TI2I dataset for text-guided image editing.}
    \label{fig:app_example_edit}
\end{figure}

\textbf{Text-guided image editing}~
\cref{fig:app_example_edit} shows several examples of text-guided image editing using DDPM inversion with and without DATE applied to the source images. For each method, we present the processed image obtained with the hyperparameter that makes LPIPS less than 0.25. These examples demonstrate that applying DATE to DDPM inversion improves text-guided image editing by better preserving the structure of the source image while improving alignment with the target prompt.

\section{Limitation and broader impact}
\label{app_sec:limit}

\textbf{Limitation}~~
One primary limitation of our approach is the additional computational overhead introduced by updating text embeddings during the sampling process. These updates require gradient computations, resulting in increased sampling time and memory usage. However, our experiments demonstrate that DATE outperforms fixed embeddings under equivalent sampling-time budgets, suggesting a favorable trade-off between efficiency and effectiveness. Nevertheless, repeated gradient computations present challenges in terms of memory and computing efficiency, especially in resource-constrained settings. Developing memory-efficient techniques for gradient updates is an important direction for future work.

Another potential limitation is the dependence on the evaluation function. While we explore the use of multi-objective evaluation functions and provide empirical evidence that DATE does not overfit the evaluation function itself, the overall generation quality can still be influenced by the design and reliability of the evaluation function. This highlights a broader challenge in text-to-image generation: the field continues to lack fully reliable, general-purpose evaluation metrics. Continued research on evaluation protocols and their integration with guidance mechanisms is crucial for advancing robust and generalizable generation frameworks.

\textbf{Broader impact}~~
Improving conditional embeddings in diffusion-based generative models remains an underexplored area, despite being a key component of conditional generation. Our work addresses this gap by proposing a general and effective method to refine text embeddings during sampling, thereby enhancing alignment between the prompt and the generated image. A significant advantage of our approach is that it operates without requiring any additional model training and is agnostic to the backbone model and sampler. This makes it readily applicable to a wide range of text-to-image generation systems.

However, using external modules introduces potential vectors for misuse. For example, adversarial manipulation of these components could compromise model safety and lead to unintended or harmful outputs. To mitigate these risks, appropriate safeguards could be incorporated into the evaluation functions and sampling process. Responsible deployment of such systems should account for these concerns.

\section{License information}
\label{app_sec:license}

Our implementation will be publicly released under standard community licenses. In addition, we provide the license information for the datasets and models used in this paper:

\begin{description}[style=nextline, leftmargin=5em]
  \item[SD v1.5:] \url{https://huggingface.co/spaces/CompVis/stable-diffusion-license}
  \item[PixArt-$\alpha$:] \url{https://github.com/PixArt-alpha/PixArt-alpha/blob/master/LICENSE} 
  \item[CLIP:] \url{https://github.com/openai/CLIP/blob/main/LICENSE}
  \item[ImageReward:] \url{https://github.com/THUDM/ImageReward/blob/main/LICENSE}
  \item[TIFA:] \url{https://github.com/Yushi-Hu/tifa/blob/main/LICENSE}
  \item[COCO:] \url{https://cocodataset.org/#termsofuse} 
  \item[AnE:] \url{https://github.com/yuval-alaluf/Attend-and-Excite/blob/main/LICENSE} 
  \item[ImageNet-R-TI2I:] \url{https://github.com/MichalGeyer/plug-and-play}
  \item[Restart:] \url{https://github.com/Newbeeer/diffusion_restart_sampling}
  \item[EBCA:] \url{https://github.com/EnergyAttention/Energy-Based-CrossAttention}
  \item[CONFORM:] \url{https://github.com/gemlab-vt/CONFORM/blob/main/LICENSE}
  \item[DDPM Inversion:] \url{https://github.com/inbarhub/DDPM_inversion/blob/main/LICENSE}

\end{description}

\end{document}